\def\doi#1{\href{https://doi.org/\detokenize{#1}}{\url{https://doi.org/\detokenize{#1}}}}
\newcommand{\TC}{{\sf TC}}
\newcommand{\RP}{{\mathbb {RP}}}
\newcommand{\w}{{\rm w}}
\newcommand{\h}{{\mathfrak h}}
\newcommand{\Z}{{\mathbb Z}}
\newcommand{\R}{{\mathbb R}}
\newcommand{\tc}{{\sf TC}}
\newcommand{\q}{{\mathfrak q}}
\begin{document}
\title{Parametrized motion planning and topological complexity
\thanks{Both authors were partially supported by an EPSRC - NSF grant.}}
%
%
\author{Michael Farber\inst{1}\orcidID{0000-0003-1419-4347} \and
Shmuel Weinberger\inst{2}\orcidID{0000-0002-4315-9164} }
\authorrunning{M. Farber et al.}
%
\institute{Queen Mary University of London, London UK \and
The University of Chicago, Chicago USA\\
\email{m.farber@qmul.ac.uk}\\
\email{shmuel@math.uchicago.edu}}
\maketitle              
\begin{abstract}
In this paper we study paramertized motion planning algorithms which provide universal and flexible solutions to diverse motion planning problems. Such algorithms are intended to function under a variety of external conditions which are viewed as parameters and serve as part of the input of the algorithm. 
Continuing the recent paper \cite{cfw}, we study further the concept of parametrized topological complexity. We analyse in full detail the problem of controlling a swarm of robots in the presence of multiple obstacles in Euclidean space which served for us a natural motivating example. We present an explicit parametrized motion planning algorithm solving the motion planning problem for any number of robots and obstacles in $\R^d$. This algorithm is optimal, it has minimal possible topological complexity for any $d\ge 3 $ odd. 
Besides, we describe a modification of this algorithm which is optimal for $d\ge 2$ even. 
We also analyse the parametrized topological complexity of sphere bundles 
using the Stiefel - Whitney characteristic classes. 

\keywords{Motion planning algorithm \and topological complexity \and parametrized topological complexity
\and collision free motion of swarms of robots \and  Stiefel - Whitney characteristic classes.}
\end{abstract}

\section{Introduction}

Algorithmic motion planning in robotics is a well established discipline. 
Typically, one is given a moving system $S$ with $k$ degrees of freedom and a two or three-dimensional workspace $V$. The geometries of $S$ and of $V$ are known in advance, they determine the configuration space of the system, $X$. The latter is a subset of $\R^k$, parametrizing all placements (or configurations) of the system $S$, each represented by a tuple of $k$ real parameters, such that in this placement $S$ lies fully within $V$. 

To create an autonomously functioning system one designs a motion planning algorithm. Such an algorithm takes as input the initial and the final states of the system and produces a motion of the system from the initial to final state as output, see monographs \cite{latombe}, \cite{lavalle}. 

A topological approach to the robot motion planning problem was suggested in \cite{far03}, \cite{far05}, \cite{mfinvit}; 
it was reported at WAFR in 2004, see \cite{mfwafr}, and developed further in mathematical literature. 
This approach will be briefly reviewed later in this paper. The topological techniques gives a measure of complexity of the motion planning algorithms and explains relationships between instabilities occurring in robot motion planning algorithms and topological features of robots' configuration spaces.

Present work extends the approach initiated in \cite{far03}. 
We study motion planning algorithms of a new type, we call them {\it \lq\lq parametrized motion planning algorithms\rq\rq}. We employ tools of algebraic topology to measure complexity of these algorithms. 
 We also describe specific examples important for applications in robotics as well as examples interesting mathematically. 

The motivation for this work lies in the desire for our algorithms to be 
{\it universal} or {\it flexible} in the sense that they should be able to function in a variety of situations, involving variable external conditions. Consequently, we view the external conditions as parameters and consider them as part of the input of the algorithm. A typical situation of this kind arises when we are dealing with collision free motion of many objects (robots) moving in 3-space avoiding a set of obstacles, and the positions of the obstacles are a priori unknown. In this case the positions of the obstacles can be viewed as \lq\lq the external conditions\rq\rq. This specific motion planning problem serves as the main motivation for us in this work and we shall analyse it in full detail below. 

To illustrate our approach consider the following practical situation. A naval commander controls a fleet of  submarines in waters with mines, which are movable and are relocated every 24 hours due to ocean currents and adversary actions.  Each morning 
 the commander assigns a task for every submarine to move from the current to the desired positions such that no collisions between the submarines or between the submarines and the mines occur. A parametrised motion planning algorithm, as we discuss in this paper, will take as input the positions of the mines and the current and desired positions of the submarines and will produce as output a collision-free motion of the fleet. 
In this example the positions of the mines represent the external conditions of the system. 

In a recent publication \cite{cfw} we presented mathematical theory of parametrized motion planning algorithms and parametrized topological complexity. In this paper, intended for the engineering community, we give a brief exposition of the main ideas and techniques adding motivating examples. Besides, we present a number of new results: (a)  an explicit parametrized motion planning algorithm for controlling $n\ge 1$ robots in the presence of $m\ge 1$ obstacles in the Euclidean space $\R^d$ which is optimal for $d$ odd (for example, for $d=3$); and (b) we use the technique of characteristic classes of vector bundles to describe parametrized topological complexity of spherical bundles (\lq\lq parametrised families of spheres\rq\rq). 

\section{Parametrized Motion Planning Algorithms}\label{sec2}

A motion planning algorithm takes as input pairs of admissible states of the system and generates a continuous motion of the system connecting these two states as output. 
Let $X$ be the configuration space of the system.
Given a pair of states $(x_0, x_1) \in  X \times  X$, a motion planning algorithm produces a continuous path 
$\gamma  : I \rightarrow  X$ with $\gamma (0) = x_0$ and $\gamma (1) = x_1$, where $I = [0,1]$ is the unit interval. 

Let $X^I$ denote the space of all continuous paths in $X$ (with the compact-open topology). The map
$\pi :X^I \rightarrow X\times X,$ where $ \pi (\gamma )=(\gamma (0),\gamma (1))$,
is a fibration, with fiber $\Omega X$, the based loop space of $X$. A solution of the motion planning problem, a motion planning algorithm, is then a section of this fibration, i.e., a map $s: X \times  X \rightarrow  X^I$ satisfying $\pi  \circ  s = {\rm {id}}_{X\times X}$. 

The section $s$ cannot be continuous as a function of the input unless the space $X$ is contractible; see \cite{far03}.

For a path-connected topological space $X$, the topological complexity $\tc(X)$ is defined to be the sectional category, or \v Svarc genus, of the fibration $\pi$, $\tc(X) = {\rm {secat}}(\pi )$. That is, $\tc(X)$ is the smallest integer $k\ge 0$ for which there is an open cover $X \times X = U_0 \cup U_1 \cup \cdot \cdot \cdot \cup U_k$, and the map $\pi$  admits a continuous section $s_j : U_j \rightarrow  X^I$ satisfying $\pi  \circ  s_j = {\rm {id}}_{U_j}$ for each $j$. 
We refer to the survey \cite{far05} and the volume \cite{grant} for detailed discussions of the invariant 
$\tc (X)$. 

Recent important results on $\tc(X)$ were obtained in \cite{gonz}. We mention also the result of J. Gacia-Calcines \cite{garcia} which states that if $X$ is a metrisable separable ANR then in the definition of 
the topological complexity $\tc(X)$ instead of open covers one may use arbitrary covers, or equivalently, arbitrary partitions $X\times X= U_0\sqcup U_2\sqcup \dots\sqcup U_k$ admitting continuous sections
$s_i: U_i\to X^I$ for $i=0, 1, \dots, k$. Here ANR stands for Absolute Neighbourhood Retract, see \cite{borsuk}, \cite{dold}.
The assumption that the configuration space $X$ is a metrisable separable ANR is typically satisfied in all robotics applications. 

Any locally compact and locally contractible subset of $\R^n$ is an ANR. 
A metrisable separable topological space is an ANR if it is the total space of  
a locally trivial fibre bundle whose base and fibre are ANRs. We refer to \cite{borsuk} for further information.  

Next we describe a generalisation of the concept of topological complexity, where the motion of the system is constrained by external conditions, parametrized by points of another topological space, $B$. For any point $b\in B$ the system has configuration space $X_b$ and we consider the spaces $X_{b_1}$ and $X_{b_2}$ for $b_1\not=b_2$ being disjoint. The disjoint union 
\begin{eqnarray}\label{sqcup}
E=\sqcup_{b\in B} X_b
\end{eqnarray} 
has natural topology in which the \lq\lq fibers\rq\rq \,  $X_b$ are closed subspaces and the projection $p: E\to B$, where $p(X_b)=\{b\}$, is a continuous map. 

One possibility is that the space $E$ is the Cartesian product $E=X\times B$ which means that the spaces of configurations living under all possible external conditions can be naturally identified. 
This assumption is however very strong, it is not satisfied in many important examples, including the situations which will be considered later in this paper. 

A reasonable weaker assumption is that the projection $p: E\to B$ is {\it a locally trivial bundle}. This means that the  
space of external conditions $B$ admits an open cover $\{U_i\}_{i\in J}$ with the property that each 
preimage $p^{-1}(U_i)=\sqcup_{b\in U_i}X_b$ is homeomorphic to the product $X\times U_i$; more precisely, this means that there is a continuous map $F_i: p^{-1}(U_i)\to X$ such that the map 
$$p^{-1}(U_i)\to X\times U_i, \quad e\mapsto (F_i(e), p(e)),\quad \mbox{where}\quad e\in p^{-1}(U_i), \quad i\in J,$$
is a homeomorphism. In other words, the spaces of configurations $X_{b_1}, \, X_{b_2}$ living under close enough external conditions $b_1\sim b_2$ can be naturally identified. 

A motion planning algorithm must take as input pairs of configurations $(e, e')$ living under the same external conditions (i.e. $p(e)=p(e')\in B$) and produce as output a continuous motion of the system $\gamma: [0,1]\to E$
with the properties $\gamma(0)=e$, $\gamma(1) =e'$ and, moreover, $p(\gamma(t))=p(e)=p(e')\in B$ for any $t\in [0,1]$; the latter property means that the motion of the system is performed under the constant external conditions. 

Given a locally trivial bundle $p: E\to B$ with fibre $X$, we denote by $E\times_B E \subset E\times E$ the subspace consisting of all pairs $(e, e')\in E\times E$ with $p(e)=p(e')\in B$. 
Besides, we denote by $E^I_B\subset E^I$ the subspace of the path-space consisting of all continuous paths 
$\gamma: I\to E$ such that the path $p\circ \gamma: I\to B$ is constant; here $I=[0,1]$ denotes the unit interval. The evaluation map
\begin{eqnarray}\label{pi}
\Pi: E^I_B \to E\times_B E, \quad \mbox{where}\quad \Pi(\gamma) =(\gamma(0), \gamma(1)),
\end{eqnarray}
is also a locally trivial fibration. Its fibre over a pair $(e, e')\in E\times_B E$ is the space of all 
paths in the fibre starting at $e$ and ending at $e'$; this space is homotopy equivalent to $\Omega X$, the space of based loops in $X$. 

\begin{definition} A parametrized motion planning algorithm is a section $s: E\times_B E\to E^I_B$ of 
(\ref{pi}). 
\end{definition}


Corollary \ref{cor1} below explains why typically parametrized motion planning algorithms have discontinuities.
The following definition gives a natural measure of complexity of parametrized motion planning algorithms. 

\begin{definition}\label{def1} Let $p: E\to B$ be a locally trivial bundle with the base $B$ and the fibre $X$ being metrizable separable ANR's. 
The parametrized topological complexity $\tc[p:E\to B]$ is the smallest integer $k\ge 0$ such that there exists
a partition
$$E\times_B E= F_0\sqcup F_1\sqcup \dots\sqcup F_k$$
with the property that over each set $F_i$ there exists a continuous section $s_i:F_i\to E^I_B$ of the fibration (\ref{pi}), where $i=0, 1, \dots, k$. The sections $s_0, \dots, s_k$ determine a globally defined section $s: E\times_B E\to E^I_B$ (i.e. a parametrized motion planning algorithm) by the rule $s|_{F_i}=s_i$. 
\end{definition}

We refer to \cite{cfw}, Proposition 4.7, which states that the above definition is equivalent to the one with open sets instead of arbitrary partitions. 
Note the following obvious inequality
\begin{eqnarray}\label{ineq}
\tc[p:E\to B] \, \ge \, \tc(X),
\end{eqnarray}
where $X$ is the fibre of $p: E\to B$. 

\begin{corollary}\label{cor1}  
If there is a continuous motion planning algorithm $s: E\times_B E\to E^I_B$ then the fibre $X$ of $p: E\to B$ is contractible. 
\end{corollary}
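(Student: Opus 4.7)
The plan is to reduce the statement to the classical (non-parametrized) fact quoted earlier from \cite{far03}, namely that the ordinary topological complexity $\tc(X)$ equals $0$ if and only if $X$ is contractible. The existence of a global continuous section $s: E\times_B E\to E^I_B$ means that the trivial partition $E\times_B E = F_0$ with $F_0 = E\times_B E$ admits a continuous section, so by Definition \ref{def1} we have $\tc[p:E\to B] = 0$. Applying inequality (\ref{ineq}) then gives $\tc(X) \le \tc[p:E\to B] = 0$, and the classical result of \cite{far03} forces $X$ to be contractible.

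Alternatively, and perhaps more transparently for a reader who wants to see why the parametrized hypothesis implies the unparametrized one, I would argue directly by restricting to a single fibre. Pick any basepoint $b_0 \in B$ and consider the subspace of $E\times_B E$ consisting of pairs $(e,e')$ with $p(e)=p(e')=b_0$; this is exactly $X_{b_0}\times X_{b_0}$, and $X_{b_0}$ is homeomorphic to the model fibre $X$ because $p$ is a locally trivial bundle. Similarly, the paths in $E^I_B$ lying in the fibre over $b_0$ form a copy of $X^I_{b_0} \cong X^I$, and the evaluation map $\Pi$ restricts to the usual endpoint fibration $\pi : X^I\to X\times X$. Hence the restriction of $s$ is a continuous global section of $\pi$, which by \cite{far03} forces $X$ to be contractible.

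There is no real obstacle here: the corollary is essentially a sanity check that the parametrized invariant refines the classical one, and both routes above make this formal in one step. I would present the first version (via (\ref{ineq})) because it is the shortest, and mention the direct restriction to $X_{b_0}\times X_{b_0}$ as a remark explaining geometrically why the parametrized theory recovers the classical obstruction on each fibre.
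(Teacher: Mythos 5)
Your first route is exactly the paper's proof: it invokes inequality (\ref{ineq}) to deduce $\tc(X)=0$ from $\tc[p:E\to B]=0$ and then cites Theorem~1 of \cite{far03} for contractibility (keeping in mind the shift between the reduced and non-reduced conventions, which the paper flags). The alternative restriction-to-a-fibre argument you sketch is also correct and is indeed the geometric content behind (\ref{ineq}), but the paper stays with the shorter inequality-based argument.
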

\begin{proof} By (\ref{ineq}), the vanishing of $\tc[p:E\to B]$ implies the vanishing of $\tc(X)$.
Theorem 1 from \cite{far03} states that $\tc(X)=0$ is equivalent to contractibility of $X$. Note that in \cite{far03} we used a non-reduced version of topological complexity which is greater by 1 than the reduced version.
\end{proof}

The inverse of Corollary \ref{cor1} is also true:

\begin{lemma}\label{lmtriv}
If the the fibre $X$ of a locally trivial fibration $p: E\to B$ is contractible and the base $B$ is paracompact then there exists a globally defined continuous parametrized motion planning algorithm $s: E\times_BE \to E^I_B$. 
\end{lemma}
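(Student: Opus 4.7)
The plan is to reduce the construction of a continuous parametrized motion planning algorithm to the existence of a fibrewise contraction of $E$ onto a global section of $p$, and then to define $s$ by concatenation inside each fibre. First, I would produce a continuous section $\sigma: B\to E$ of $p$ together with a fibrewise deformation $H: E\times I\to E$ satisfying $H(e,0)=e$, $H(e,1)=\sigma(p(e))$, and $p(H(e,t))=p(e)$ for all $(e,t)\in E\times I$. Such data exist because a locally trivial bundle with contractible fibre over a paracompact base is fibre-homotopy equivalent to the trivial bundle ${\rm id}_B: B\to B$ (Dold's fibre-homotopy trivialisation theorem). Concretely, one fixes a contraction $h: X\times I\to X$ of $X$ to a basepoint, transfers it via the local trivialisations $\phi_\alpha: p^{-1}(U_\alpha)\to X\times U_\alpha$ to a family of local fibrewise contractions, and patches them together using a partition of unity subordinate to $\{U_\alpha\}$; this is where paracompactness of $B$ enters decisively.

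With $\sigma$ and $H$ in hand, I would define $s: E\times_B E\to E^I_B$ by
\[
s(e,e')(t)=\left\{\begin{array}{ll}H(e,\,2t),&0\le t\le 1/2,\\ H(e',\,2-2t),&1/2\le t\le 1.\end{array}\right.
\]
At $t=1/2$ both branches equal $\sigma(p(e))=\sigma(p(e'))$, so the formula gives a well-defined continuous path whose endpoints are $e$ and $e'$, and $p(s(e,e')(t))=p(e)$ for every $t$, so $s(e,e')$ indeed lies in $E^I_B$. To verify continuity of $s$ itself I would invoke exponential adjointness: it suffices to check continuity of the adjoint map $E\times_B E\times I\to E$, $(e,e',t)\mapsto s(e,e')(t)$, which is immediate from the continuity of $H$ and the agreement of the two cases at $t=1/2$.

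The main obstacle is the first step. Producing a global section $\sigma$ and a compatible fibrewise contraction $H$ from the local contractions of the fibre $X$ is essentially the content of Dold's fibre-homotopy theorem and is the point at which paracompactness of $B$ becomes essential; the subsequent concatenation is then purely formal. A direct, more explicit construction of $\sigma$ and $H$ can also be carried out by well-ordering the indices of a locally finite refinement of $\{U_\alpha\}$ and performing the local contractions in succession, with the corresponding elements of a subordinate partition of unity controlling the time intervals on which each local contraction is applied.
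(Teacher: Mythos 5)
The paper itself does not include a proof of Lemma~\ref{lmtriv} (it says ``The proof will be published elsewhere''), so there is no in-paper argument to compare against; I will assess your proposal on its own.

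Your argument is essentially correct, and the main route you choose is the right one. Viewing $p\colon E\to B$ as the unique fibre-preserving map from the fibration $p$ to the trivial fibration $\mathrm{id}_B\colon B\to B$, the hypothesis that each fibre $X_b$ is contractible says precisely that $p$ restricts to a homotopy equivalence on every fibre. Since a locally trivial bundle over a paracompact base is a numerable Hurewicz fibration, Dold's fibre-homotopy equivalence theorem applies and yields a fibre-homotopy inverse $\sigma\colon B\to E$, which is automatically a section ($\mathrm{id}_B\circ\sigma=p\circ\sigma$ forces $p\circ\sigma=\mathrm{id}_B$), together with a vertical homotopy $H\colon E\times I\to E$ from $\mathrm{id}_E$ to $\sigma\circ p$. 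Your concatenation formula then produces a globally defined section of $\Pi$: the two branches agree at $t=1/2$ because $p(e)=p(e')$ on $E\times_B E$, the path stays in the fibre over $p(e)$ for all $t$, and continuity of $s$ follows by the exponential law (adjointness) from continuity of $H$, since $I$ is locally compact Hausdorff. This is a clean and complete argument, and notably it does not require any hypothesis on $E\times_B E$ beyond what is inherited from $E$, since Dold is applied to $p$ rather than to $\Pi$.

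One caveat on the ``concrete'' alternative you sketch: you cannot literally ``patch together'' local fibrewise contractions with a partition of unity, because the fibre $X$ is merely contractible, not convex or linear, so the pointwise affine combination that a partition of unity would suggest has no meaning in $X$. The well-ordering variant you mention (performing local contractions in succession over a locally finite refinement, using the partition of unity only to schedule the time intervals) is in fact the standard mechanism inside the proof of Dold's theorem, but making it rigorous requires real care — in particular ensuring that the infinite composition of deformations is continuous and that the resulting endpoint map is a genuine section. Since you also invoke Dold's theorem directly, this does not affect the correctness of the proposal, but the ``concrete'' sketch on its own would not pass as a proof.
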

The proof will be published elsewhere.

\begin{lemma}\label{lm2}
If $p:E\to B$ is a trivial bundle then  $\tc[p:E\to B] \, = \, \tc(X)$, i.e. in this case 
(\ref{ineq}) is an equality. 
\end{lemma}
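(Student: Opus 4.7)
The plan is to establish the reverse inequality $\tc[p:E\to B]\le \tc(X)$, since the direction $\tc[p:E\to B]\ge \tc(X)$ is already given by (\ref{ineq}).

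First I would unwind the definitions in the trivial case. Triviality of $p: E\to B$ means we have a homeomorphism $E\cong X\times B$ under which $p$ becomes the projection $\pr_B: X\times B\to B$. Under this identification, the fibred product becomes
\begin{eqnarray*}
E\times_B E \,\cong\, X\times X\times B,
\end{eqnarray*}
and the subspace $E^I_B\subset E^I$ of paths with constant $B$-component becomes homeomorphic to $X^I\times B$ (a path $\gamma:I\to X\times B$ with $\pr_B\circ\gamma$ constant is the same data as a path in $X$ together with a single point of $B$). Under these identifications the fibration $\Pi$ of (\ref{pi}) becomes $\pi\times\id_B: X^I\times B\to X\times X\times B$, where $\pi:X^I\to X\times X$ is the usual endpoint map.

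Next I would transport a motion planning structure on $X$ up to a parametrized one on $E$. Let $k=\tc(X)$ and choose a partition $X\times X=U_0\sqcup U_1\sqcup\dots\sqcup U_k$ together with continuous sections $s_i: U_i\to X^I$ of $\pi$. Define
\begin{eqnarray*}
F_i \,=\, U_i\times B \,\subset\, X\times X\times B \,\cong\, E\times_B E,
\end{eqnarray*}
so that $E\times_B E=F_0\sqcup F_1\sqcup\dots\sqcup F_k$, and define $\tilde s_i: F_i\to E^I_B\cong X^I\times B$ by $\tilde s_i(x_0,x_1,b)=(s_i(x_0,x_1),b)$. Each $\tilde s_i$ is continuous because $s_i$ is, and $\Pi\circ \tilde s_i=\id_{F_i}$ by construction. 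Hence $\tc[p:E\to B]\le k=\tc(X)$, completing the proof.

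I do not expect any serious obstacle: the only thing to be slightly careful about is the identification $E^I_B\cong X^I\times B$ (and the fact that the trivialization intertwines the endpoint maps), which is routine. One should also note that Definition \ref{def1} allows arbitrary partitions (justified by \cite{cfw}, Proposition 4.7), so we need not worry about passing from partitions to open covers.
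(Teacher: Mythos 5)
Your proposal is correct and is essentially the paper's own proof: both identify $E\times_BE\cong X\times X\times B$ and $E^I_B\cong X^I\times B$ under the trivialization, then transport a partition $X\times X=U_0\sqcup\dots\sqcup U_k$ with sections $s_i$ to the partition $F_i=U_i\times B$ with sections $s_i\times\mathrm{id}_B$. No substantive difference in approach or level of detail.
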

\begin{proof}
If $E=X\times B$ then $E\times_BE=X\times X\times B$ and $E^I_B=X^I\times B$. For a subset $U\subset X\times X$ admiting a continuous section $s:U\to X^I$ of the paths fibration $X^I\to X\times X$ one may define a continuous section 
$$s\times{\rm {id}}: U\times B\to X^I\times B=E^I_B$$
of (\ref{pi}) over $U\times B$. Thus, any partition $X\times X=U_0\sqcup U_1\sqcup \dots U_k$ as in the Definition of $\tc(X)$ given above yields a partition of $E\times_BE$ of the same cardinality satisfying Definition \ref{def1}. \qed
\end{proof}

The proof of Lemma \ref{lm2} shows that in the case when $p: E\to B$ is a trivial fibration one may construct a motion planning algorithm by viewing the external conditions as being \lq\lq {\it stationary}\rq\rq. We shall see below that it is not the case when the fibration $p: E\to B$ is locally trivial but not globally trivial. 
Moreover, the examples described below show that due to global topological properties of the fibration $p:E\to B$,
the difference $\tc[p:E\to B] \, - \, \tc(X)$ can be arbitrarily large.


\section{Multiple Robots and Obstacles in Euclidean Space}

Consider $n$ robots and $m$ obstacles moving in the Euclidean space $\R^d$. 
The key rule states that the robots must not collide
with the other robots and with the obstacles. A typical motion planning problem arises when there are given 
the initial and desired positions of the robots as well as the positions of the obstacles and the algorithm generates a motion of each robot from the initial to the desired positions avoiding the obstacles and with no collisions between the robots. It is required for the algorithm to be {\it universal} in the sense that it must be capable of working for any configuration of the obstacles and for any pair of admissible configurations (the initial and the desired)  of the robots. This problem was an important motivation for us in developing the approach of parametrized motion planning. Once the positions of the obstacles are given, the configuration space of the swarm of robots is determined as they must move in the complement of the set of obstacles. Thus, in this example we have a family of configuration spaces, parametrized by the configurations of the set of obstacles, which 
 can be viewed as  \lq\lq{\it the external conditions}\rq\rq\, for the swarm of robots. 

Denote by $z_1, z_2, \dots, z_n\in \R^d$ the centres of $n$ robots and by $o_1, o_2, \dots, o_m\in \R^d$ the centres of $m$ obstacles. The requirement that the robots do not collide with the obstacles  and with the other robots can be expressed geometrically as $|z_i-z_j|>\epsilon$ (for $i\not=j$)
and $|z_i-o_j|>\epsilon$, where $\epsilon\ge 0$ is a number depending on physical sizes of the robots and obstacles. For simplicity in this work we shall assume that $\epsilon=0$, i.e. the non-colliding conditions are
$z_i\not=z_j$ (for $i\not=j$) and $z_i\not= o_j$. The case $\epsilon=0$, which we discuss in full detail in this paper, retains the key topological features of the problem while allowing to avoid additional mathematical difficulties arising when $\epsilon>0$. 

As is common in topology, we denote by $F(Y, n)$ the configuration space of $n$ distinct points lying in the topological space $Y$, i.e. $F(Y, n)=\{(y_1, y_2, \dots, y_n)\in Y^n; y_i\not= y_j \, \mbox{for}\, i\not=j\}$. 
Using this notation we may say that an admissible configuration $(z_1, \dots, z_n, o_1, \dots, o_m)$ of $n$ robots and $m$ obstacles in $\R^d$ is 
a point of the configuration space $F(\R^d, n+m)$ and the configuration of $m$ obstacles $(o_1, \dots, o_m)$ is a point of $F(\R^d, m)$. The natural projection 
\begin{eqnarray}\label{fn}
p: F(\R^d, n+m)\to F(\R^d, m), \quad \mbox{where}\quad (z_1, \dots, z_n, o_1, \dots, o_m)\mapsto (o_1, \dots, o_m),
\end{eqnarray}
is known as the Fadell - Neuwirth fibration. Theorem 1 of Fadell and Neuwirth \cite{fn} states that 
(\ref{fn}) is indeed a locally trivial fibration. Given a configuration of obstacles $b=(o_1, \dots, o_m)\in F(\R^d, m)$, the preimage
$p^{-1}(b)$ coincides with the configuration space 
$$p^{-1}(b) = F(\R^d-\{o_1, \dots, o_m\}, n)=X_b$$ and we see that the total space of the Fadell - Neuwirth fibration is the disjoint union 
$$F(\R^d, n+m) = \bigsqcup_{(o_1, \dots, o_m)\in F(\R^d, m)} F(\R^d-\{o_1, \dots, o_m\}, n),$$
as in (\ref{sqcup}). Thus we are within the formalism of parametrized motion planning as described in \S\ref{sec2} with $E=F(\R^d, n+m) $, $B=F(\R^d, m)$ and $p:E\to B$ being the Fadell - Neuwirth fibration (\ref{fn}). 

The space $E\times_BE$ (defined in \S \ref{sec2}) in this case can be identified with the set of all configurations
\begin{eqnarray}\label{input}
(z_1, \dots, z_n, z'_1, \dots, z'_n, o_1, \dots, o_m)\in (\R^d)^{2n+m}
\end{eqnarray}
such that $(z_1, \dots, z_n, o_1, \dots, o_m)\in F(\R^d, n+m)$ and $(z'_1, \dots, z'_n, o_1, \dots, o_m)\in F(\R^d, n+m)$. Here $(z_1, \dots, z_n)$ stands for the initial configuration of the robots, $(z'_1, \dots, z'_n)$ is the desired configuration of the robots, and $(o_1, \dots, o_m)$ is the configuration of the obstacles; therefore (\ref{input}) encodes the initial and final configurations of all robots as well as the positions of the obstacles. 
A parametrized motion planning algorithm takes the configuration (\ref{input}) as input and produces a continuous collective motion of the robots $(z_1(t), \dots, z_n(t))$, where $t\in [0,1]$, such that 
$z_i(0)=z_i$, $z_i(1)=z'_i$ for $i=1, 2, \dots, n$ and for every $t\in [0,1]$ the configuration 
$
(z_1(t), \dots, z_n(t), o_1, \dots, o_m)\in F(\R^d, n+m)
$
consists of pairwise distinct points. Note that this motion does not involve obstacles, i.e. it is a path in the 
space $E^I_B$, see \S \ref{sec2}. 

An explicit parametrized motion planning algorithm for motion of swarms of robots and obstacles in the Euclidean space $\R^d$ will be described below in Section \ref{sec6}. This algorithm is optimal (i.e. it has the minimal possible number of domains of continuity) for any odd $d\ge 3$. 

\begin{theorem}[Theorem 9.1 in \cite{cfw}]\label{thm11} \label{thm111}
Let $d\ge 3$ be odd. The parametrized topological complexity 
of the motion of $n\ge 1$ non-colliding robots in the presence of $m\ge 2$ non-colliding obstacles is equal to $2n+m-1$. In other words, the parametrized topological complexity of the Fadell - Neuwirth bundle 
$p: F(\R^d, n+m)\to F(\R^d, m)$ is 
\begin{eqnarray}\label{form1}
\tc[p: F(\R^d, n+m)\to F(\R^d, m)] = 2n+m-1.
\end{eqnarray}
\end{theorem}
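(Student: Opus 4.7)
The plan is to establish the equality in (\ref{form1}) by proving matching upper and lower bounds for $\tc[p:F(\R^d,n+m)\to F(\R^d,m)]$. The upper bound $\tc[p]\le 2n+m-1$ will be obtained constructively from the explicit parametrized motion planning algorithm to be given in Section~\ref{sec6}: that algorithm partitions $E\times_B E$ into $2n+m$ sets on each of which a continuous section of $\Pi:E^I_B\to E\times_B E$ is written down in closed form, so Definition~\ref{def1} directly yields the desired inequality.

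For the matching lower bound $\tc[p]\ge 2n+m-1$, the plan is to apply the standard cohomological bound for sectional category: for any coefficient ring $R$, $\tc[p]$ is at least the length of a nonzero cup product of classes in the kernel of
\[
\Delta_B^{\ast}:H^{\ast}(E\times_B E;R)\longrightarrow H^{\ast}(E;R),
\]
where $\Delta_B:E\to E\times_B E$ is the fiberwise diagonal. The zero-divisors are manufactured from the Arnold--Cohen description of $H^{\ast}(F(\R^d,k);\Z)$: for $d$ odd, this is the graded-commutative algebra on even-degree generators $a_{ij}$ of degree $d-1$, subject to the Arnold relations. Writing $\pi_1,\pi_2:E\times_B E\to E$ for the two projections onto $E=F(\R^d,n+m)$, each difference
\[
\bar a_{ij}\ :=\ \pi_2^{\ast}a_{ij}-\pi_1^{\ast}a_{ij}
\]
lies in $\ker\Delta_B^{\ast}$, and $\bar a_{ij}=0$ whenever both indices $i,j>n$ label obstacles, because in that case $a_{ij}$ is pulled back from the base $B$.

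The crux is then a single cohomological computation, exhibiting a nonzero product of $2n+m-1$ such classes $\bar a_{ij}$. A natural combinatorial choice assigns two distinct pairs $(i,j), (i,j')$ to each of the $n$ robot indices $i=1,\dots,n$ (contributing $2n$ factors) and one further pair mixing each of the remaining $m-1$ obstacles with a robot (contributing the last $m-1$ factors). I would verify non-vanishing of the resulting product by expanding each $\bar a_{ij}$ in the basis $\{\pi_1^{\ast}a_{ij},\pi_2^{\ast}a_{ij}\}$ and then feeding the expansion into the Serre spectral sequence of the auxiliary fibration $E\times_B E\to B$, whose fiber is $F(\R^d\setminus\{o_1,\dots,o_m\},n)^{\times 2}$, and checking that a chosen top monomial in the Arnold algebra of the fiber survives. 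The main obstacle will be the bookkeeping under the Arnold relations: one must show that no cancellations conspire to kill the chosen monomial. The oddness of $d$ is essential throughout, since it makes the generators commute, so that a product of $2n$ classes tied to the $n$ robots is not forced to vanish by graded-commutativity alone. Once the nonzero product is in hand, the lower bound $\tc[p]\ge 2n+m-1$ follows, matching the upper bound and yielding the theorem.
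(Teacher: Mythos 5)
Your strategy matches the paper's (and the underlying reference \cite{cfw}): the upper bound $\tc[p]\le 2n+m-1$ comes from the explicit algorithm of Section~\ref{sec6} (equivalently, from the dimensional bound of Proposition~\ref{prop1}), and the lower bound comes from Proposition~\ref{prop2} applied to zero-divisor classes $\pi_2^{\ast}a_{ij}-\pi_1^{\ast}a_{ij}$ built from the Arnold generators, with the oddness of $d$ entering because the generators have even degree.

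One concrete discrepancy worth flagging. Your combinatorial recipe --- ``two \emph{distinct} pairs $(i,j),(i,j')$ per robot plus $m-1$ mixed pairs'' --- does not reproduce the worked example in Section~5: for $n=1$, $m=2$ the nonzero product exhibited there is $(\omega_{13}-\omega'_{13})^{2}\,(\omega_{23}-\omega'_{23})$, i.e.\ a \emph{square} of one zero-divisor times a first power of another, not three factors coming from distinct index pairs. The squared factor is not an aesthetic choice: because $a_{ij}^{2}=0$ in the Arnold algebra, one computes $\bar a_{ij}^{2}=-2\,\pi_1^{\ast}a_{ij}\,\pi_2^{\ast}a_{ij}$, which is nonzero integrally but vanishes mod~$2$; so your phrase ``for any coefficient ring $R$'' should be tightened to integral (or rational) coefficients, and the recipe should be adjusted to include squared factors in the spirit of the $n=1,m=2$ computation. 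With that correction, the remaining work --- verifying via Leray--Hirsch (the paper's route) or the Serre spectral sequence (yours) that the chosen top-degree monomial survives the Arnold relations --- is exactly the bookkeeping you point to, and it is also the part the paper itself defers to \cite{cfw}, \S\,8--9, rather than reproducing in full.
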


In the case $m=1$ (i.e. when there is a unique obstacle) the base $F(\R^d, m)$ of the Fadell - Neuwirth bundle is contractible and hence the bundle is trivial. By Lemma \ref{lm2}, in this case we may equally assume that the 
obstacle is stationary.  Hence, for $m=1$ and odd $d\ge 3$ one has 
$$\tc[p: F(\R^d, n+1)\to F(\R^d, 1)]=\tc(F(\R^d-\{0\}, n))=2n.$$ 
For the last equality we refer to Theorem 5.1 from \cite{fgy} where the case $d=3$ we treated; the arguments of the proof of Theorem 5.1 from \cite{fgy} extend with minor modifications to the case $d \geq  5$
odd.\footnote{Note that in \cite{fgy} we used a non-reduced notion of topological complexity which is greater by 1 than the reduced version. }  
Thus we see that formula (\ref{form1}) remains valid for $m=1$ as well. 

An important Corollary of Theorem \ref{thm11} is an observation that {\it the parametrized topological complexity 
can exceed by arbitrary large amount the usual (i.e. non-parametrized) topological complexity of the fibre $F(\R^d-\{o_1, \dots, o_m\}, n)$, which equals $2n$. } This additional complexity can be thought as the 
extra price for the flexibility of motion planning. 

\section{Upper and Lower Bounds for $\tc[p:E\to B]$}\label{sec4}

In this section we state two results which will be used later in this paper.  
\begin{proposition}[Proposition 7.2 in \cite{cfw}] \label{prop1}
Assume that $p: E\to B$ is a locally trivial fibration with $r$-connected fibre $X$, where $r\ge 0$, and the spaces $X$, $B$ and $E$ are CW-complexes. Then 
\begin{eqnarray}\label{upper}
\tc[p:E\to B] \, < \, \frac{{{\sf {hdim}}(E\times_BE)}+1}{r+1} \le 
\frac{2\dim X+\dim B+1}{r+1}.
\end{eqnarray}
\end{proposition}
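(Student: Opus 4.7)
The plan is to realize $\tc[p:E\to B]$ as the Schwarz sectional category of the path fibration $\Pi: E^I_B \to E\times_B E$ from \eqref{pi}, and then invoke the classical dimension–connectivity bound for sectional category (\v{S}varc's theorem).

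The fibre of $\Pi$ over a pair $(e,e')\in E\times_B E$ is the space of paths in a single fibre of $p$ joining $e$ to $e'$, and is therefore homotopy equivalent to $\Omega X$. Since $X$ is $r$-connected, $\Omega X$ is $(r-1)$-connected. The standard \v{S}varc bound asserts that for a fibration whose fibre has connectivity $\geq r-1$ and whose base has CW homotopy dimension $d$, the reduced sectional category is strictly less than $(d+1)/(r+1)$. The usual argument proceeds by forming the $(k+1)$-fold fibrewise join of $\Pi$, whose fibre is the iterated join $(\Omega X)^{*(k+1)}$; iterated joins of $(r-1)$-connected spaces become sufficiently highly connected that obstruction theory over a CW base of dimension $d$ produces a section as soon as $(k+1)(r+1) > d$. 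Applying this to $\Pi$ with $d=\mathrm{hdim}(E\times_B E)$ yields the first inequality in \eqref{upper}.

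For the second inequality, I would use the bundle structure on the pullback $E\times_B E\to B$. Since $p:E\to B$ is locally trivial with fibre $X$, the pullback is locally trivial over $B$ with fibre $X\times X$. A standard patching argument using the local trivializations, together with CW structures on $B$ and on $X\times X$, produces a CW model of $E\times_B E$ with cells of dimension at most $\dim B + \dim(X\times X) = \dim B + 2\dim X$. Hence $\mathrm{hdim}(E\times_B E) \leq 2\dim X + \dim B$, and substitution into the first inequality finishes the proof.

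The only nonroutine step is the invocation of \v{S}varc's genus bound, which the authors will presumably quote from \cite{cfw} or from \v{S}varc's original work; the homotopy-dimension estimate on $E\times_B E$ is a routine consequence of the bundle structure and the CW hypotheses on $B$ and $X$. No subtlety arises from the distinction between geometric and homotopy dimension, since the statement is already phrased in terms of $\mathrm{hdim}(E\times_B E)$ in the sharper left inequality.
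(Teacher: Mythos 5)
Your proposal is correct and follows exactly the expected line of argument. This paper does not reproduce a proof of Proposition \ref{prop1}; it simply cites Proposition~7.2 of \cite{cfw}. The proof there is precisely the one you outline: one identifies $\tc[p:E\to B]$ with $\mathrm{secat}(\Pi)$, observes that the fibre of $\Pi$ is homotopy equivalent to $\Omega X$ and hence $(r-1)$-connected, and then applies the \v Svarc dimension-connectivity bound (via the fibrewise join construction and obstruction theory over a CW base) to obtain the strict inequality in terms of $\mathrm{hdim}(E\times_B E)$. The passage to the second, weaker inequality is the standard observation that $E\times_B E \to B$ is a locally trivial bundle with fibre $X\times X$, whose total space therefore has a CW model of dimension at most $\dim B + 2\dim X$. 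No gaps; your argument matches the cited source's approach.
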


Here the symbol ${\sf {hdim}}(E\times_BE)$ denotes the homotopical dimension, i.e. the minimal dimension of a CW-complex homotopy equivalent to $E\times_BE$. 

As an example we mention that the homotopical dimension of any contractible space is $0$, the space $\R^d-\{0\}$ has homotopical dimension $d-1$, etc. 

Inequality (\ref{upper}) implies that the parametrized topological complexity is finite if the base $B$ and the fibre $X$ are finite dimensional. 

The following result is an important technical tool. We refer the reader to \cite{Sp} for the definitions of the terms used in its statement. 

\begin{proposition}[Proposition 7.3 in \cite{cfw}] \label{prop2}
Let $p: E\to B$ be a locally trivial fibration with connected fibre $X$. Consider the diagonal map $\Delta: E\to E\times_BE$, where $\Delta(e) = (e, e)$. Then $\tc[p:E\to B]$ is greater than or equal to the cup-length 
of the kernel $\ker[\Delta^\ast: H^\ast(E\times_BE;R) \to H^\ast(E;R)]$ where $R$ is an arbitrary ring of coefficients. In other words, if for some cohomology classes $u_1, \dots, u_k\in H^\ast(E\times_BE;R)$ satisfying $\Delta^\ast(u_i)=0$ the cup-product 
$$u_1\smile u_2\smile \dots\smile u_k \not=0\, \in \,  H^\ast(E\times_BE;R)$$
is nonzero, then $\tc[p:E\to B]\ge k$. 
\end{proposition}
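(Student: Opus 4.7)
The argument reduces to the classical cohomological lower bound for the Schwarz genus (sectional category) of a fibration. By Definition \ref{def1}, $\tc[p: E \to B]$ equals $\mathrm{secat}(\Pi)$, where $\Pi: E^I_B \to E\times_B E$ is the evaluation fibration defined in (\ref{pi}). The classical bound (see \cite{Sp}) asserts that for any fibration $f: X \to Y$ and classes $v_1, \dots, v_k \in H^\ast(Y;R)$ with $f^\ast(v_i)=0$ and $v_1 \smile \cdots \smile v_k \neq 0$, one has $\mathrm{secat}(f) \geq k$. It therefore suffices to deduce $\Pi^\ast(u_i)=0$ from the hypothesis $\Delta^\ast(u_i)=0$.

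To achieve this, consider the map $c: E \to E^I_B$ sending each point $e \in E$ to the constant path at $e$. Evidently $\Pi \circ c = \Delta$. Moreover $c$ is a section of the evaluation-at-zero map $\mathrm{ev}_0: E^I_B \to E$, which is a fibration whose fiber over $e \in E$ consists of those paths in the fiber $X_{p(e)}$ starting at $e$. This fiber is contractible (it deformation retracts onto the constant path), so $\mathrm{ev}_0$ is a weak homotopy equivalence, and hence so is $c$. Consequently $c^\ast: H^\ast(E^I_B;R)\to H^\ast(E;R)$ is an isomorphism, and the relation $c^\ast \circ \Pi^\ast = \Delta^\ast$ forces $\Pi^\ast(u_i)=0 \iff \Delta^\ast(u_i)=0$ for every $u_i \in H^\ast(E\times_B E;R)$.

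With this translation in hand, the hypotheses of the proposition give classes $u_1, \dots, u_k \in H^\ast(E\times_BE;R)$ with $\Pi^\ast(u_i)=0$ and $u_1 \smile \cdots \smile u_k \neq 0$, so the classical Schwarz bound applied to $\Pi$ yields $\tc[p:E\to B]=\mathrm{secat}(\Pi)\geq k$. The only genuine subtlety in the argument is the verification that $c$ is a homotopy equivalence in the parametrized setting; this is formally identical to the familiar unparametrized fact that the constant-path inclusion $X \hookrightarrow X^I$ is a homotopy equivalence, carried out fiberwise over $B$, and it is in any case a mild verification rather than the conceptual heart of the proof, which is the reinterpretation of $\ker \Delta^\ast$ as $\ker \Pi^\ast$ so that the generic sectional-category bound applies verbatim.
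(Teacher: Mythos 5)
Your proof is correct and takes the standard route that the cited source follows: identify $\tc[p:E\to B]$ with $\mathrm{secat}(\Pi)$, observe via the constant-path section $c:E\to E^I_B$ (a homotopy equivalence satisfying $\Pi\circ c=\Delta$) that $\ker\Pi^\ast=\ker\Delta^\ast$, and then apply Schwarz's cup-length lower bound for sectional category. One small streamlining worth noting: instead of invoking the contractibility of the fibres of $\mathrm{ev}_0$ and a weak-equivalence argument, the explicit homotopy $H(\gamma,s)(t)=\gamma(st)$ shows directly that $c\circ\mathrm{ev}_0\simeq\mathrm{id}_{E^I_B}$, giving the homotopy equivalence without any auxiliary hypotheses on the spaces.
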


Combining the upper and lower bounds allows explicit calculation of the parametrized topological complexity in many examples. 

\section{One Robot and Two Obstacles in 3-Space}

To illustrate Theorem \ref{thm11} we consider in this section a special case of $n=1$ and $m=2$; this is the first case when the Fadell - Neuwirth fibration is not trivial. We shall use the combination of the upper and lower bounds described in \S \ref{sec4} to describe the answer. The Fadell - Neuwirth fibration in this case has the form
$p: F(\R^3, 3)\to F(\R^3, 2)$. Its base, $F(\R^3, 2)$, is homotopy equivalent to the sphere $S^2$ (the unit sphere in the 3-space), 
a standard homotopy equivalence is given by 
$$(o_1, o_2) \mapsto \frac{o_1-o_2}{|o_1-o_2|} \, \in \, S^2.$$ 
The fibre $F(\R^3-\{o_1, o_2\}),1) = \R^3-\{o_1, o_2\}$ is homotopy equivalent to the wedge $S^2\vee S^2$ of two spheres of dimension 2; this is illustrated by Figure \ref{wedge}. 
\begin{figure}[h]
\begin{center}
\includegraphics[scale=0.3]{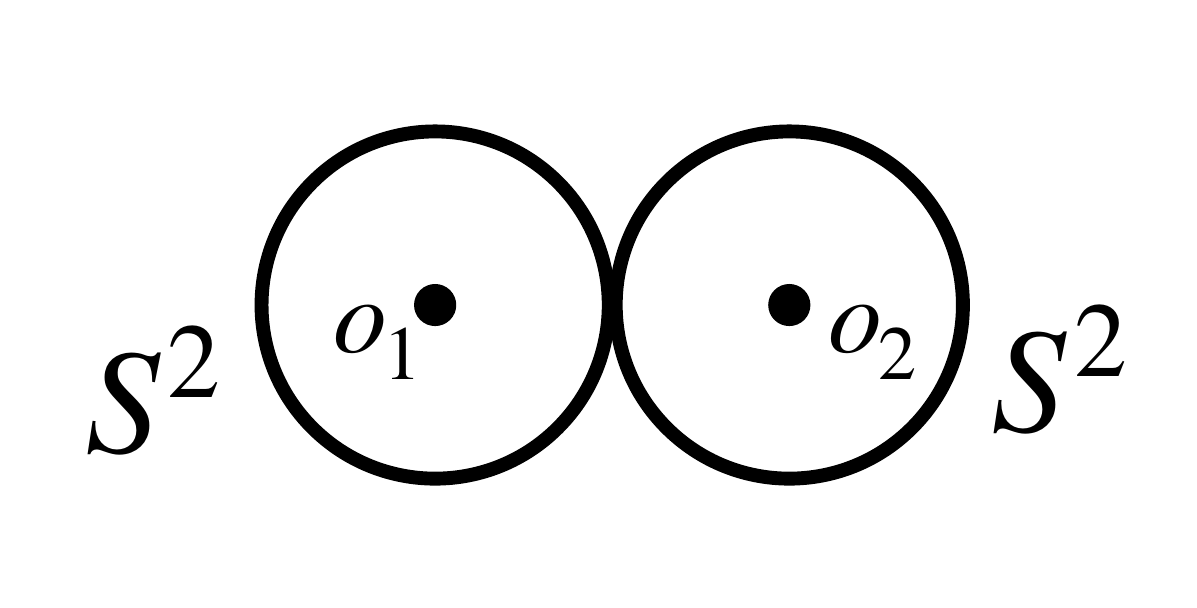}
\caption{The complement of 2 obstacles in $\R^3$ deformation retracts onto a wedge of two 2-dimensional spheres.}\label{wedge}
\end{center}
\end{figure}
$E\times_BE$ is the total space of a locally trivial fibration with the base homotopy equivalent to $S^2$ and with fibre which is homotopy equivalent to $(S^2\vee S^2)\times (S^2\vee S^2)$. 

Applying Proposition \ref{prop1} (and noting that the fibre is 1-connected) we obtain an upper bound
\begin{eqnarray}\label{uupper}
\tc[p: F(\R^3, 3)\to F(\R^3, 2)]\le 3.
\end{eqnarray}
Next we use the lower bound given by Proposition \ref{prop2} to get the opposite inequality in (\ref{uupper}). 
For this we need to understand the cohomology algebras of $E=F(\R^3, 3)$ and of $E\times_BE$ with integral coefficients. The first task is easy: it is well known \cite{fh} that the integral cohomology algebra of $F(\R^3, 3)$ has 3 generators 
$\omega_{12}, \omega_{13}, \omega_{23}$ 
of degree 2 satisfying the relations
$$\omega_{ij}^2=0\quad\mbox{ and}\quad \omega_{13}\omega_{23}=\omega_{12}(\omega_{23}-\omega_{13}).$$ Here the class 
$\omega_{12}$ is induced from the base $B=F(\R^3, 2)\simeq S^2$. Applying Leray - Hirsch theorem \cite{Sp},  we find that the cohomology algebra of the space $E\times_BE$ has 5 generators of degree 2
$$\omega_{12}, \, \omega_{13}, \, \omega_{23}, \, \omega'_{13}, \, \omega'_{23}\, \in\,  H^2(E\times_BE)$$
satisfying the relations $\omega^2_{ij}=0={\omega'}^2_{ij}$ as well as 
$$\omega_{13}\omega_{23}=\omega_{12}(\omega_{23}-\omega_{13}), \quad \omega'_{13}\omega'_{23}=\omega_{12}(\omega'_{23}-\omega'_{13}).$$
The kernel of the homomorphism $\Delta^\ast: H^\ast(E\times_BE) \to H^\ast(E)$ contains  
$\omega_{13}-\omega'_{13}$ and $\omega_{23}-\omega'_{23}$ and one can show that the product
$(\omega_{13}-\omega'_{13})^2\cdot (\omega_{23}-\omega'_{23})\in H^\ast(E\times_BE)$ is nonzero. By Proposition \ref{prop2} we obtain $\tc[p: F(\R^3, 3)\to F(\R^3, 2)]\ge 3$ implying
\begin{eqnarray}
\tc[p: F(\R^3, 3)\to F(\R^3, 2)] = 3. 
\end{eqnarray}
 This proves Theorem \ref{thm11} in the special case $n=1$ amd $m=2$. 
 We refer the reader to \cite{cfw}, \S 8 for full detail. 

\section{Algorithm}\label{sec6}

In this section we present an explicit parametrised motion planning algorithm in $\R^d$ (where $d\ge 2$) with $n$ robots and $m\ge 2$ obstacles having parametrised topological complexity $2n+m-1$. As follows from Theorem \ref{thm2},  it is optimal for any odd dimension $d\ge 3$; in particular, this algorithm is optimal in the case $d=3$ which is most directly relevant for robotics. 

\subsection{Notations} We shall denote $E=F(\R^d, n+m)$, $B=F(\R^d, m)$ and $p:E\to B$ will stand for the Fadell - Neuwirth fibration (\ref{fn}). The space $E\times_BE$ will be denoted by $\mathcal C$. 
In these notations, a motion planning algorithm is a map $s: \mathcal C\to E^I_B$ such that 
$\Pi\circ s = {\rm {id}}_{\mathcal C},$ where $\Pi$ appears in (\ref{pi}). 

\subsection{Subsets $A_{j,t}$}\label{sec62}
Fix an oriented line $L \subset \R^d$. Its orientation defines a linear order $\le$ on $L$. We shall denote by $e$ the unit vector parallel to $L$ and pointing in the direction of the orientation. We shall also fix a unit vector 
$e^\perp$ with is perpendicular to $e$ (such $e^\perp$ exists since $d\ge 2$). 
Let $\q: \R^d\to L$ denote the affine orthogonal projection onto $L$. For any $x\in \R^d$ the vector $x-\q(x) $ is perpendicular to $e$. 

Let 
\begin{equation}\label{conf}
C=(z_1, \dots, z_n, z'_1, \dots, z'_n, o_1, \dots, o_m)\in \mathcal C, \quad z_i, z'_i, o_i\in \R^d
\end{equation} 
be a configuration, where the points $z_i$ and $z'_i$ represent the initial and desired positions of the robots and the points $o_j$ represent the obstacles. We assume that $z_i\not= z_j$, $z'_i\not= z'_j$ and $o_i\not=o_j$ for all $i\not=j$ and, besides, $z_i\not= o_j\not= z'_i$ for all $i, j$. 
We shall denote by $\q(C)$ the set of projections points 
\begin{equation}\label{pc}
\q(C) = \{\q(z_i), \q(z'_i), \q(o_j); \, \, i=1, \dots, n, \, j=1,\dots, m\}.\end{equation}
Clearly, some projection points may happen to be equal and therefore the cardinality of the set $\mathfrak q(C)$ 
satisfies
$1\le |\q(C)| \le 2n+m.$

Denote by 
$A_{j,t}\subset \mathcal C$
the set of all configurations (\ref{conf}) such that the set of projections (\ref{pc})
has cardinality $j+t$ and the set of projections of the obstacles 
$\{\q(o_1), \dots, \q(o_m)\}$
has cardinality $t$. Here $j\in \{0, 1, \dots, 2n\}$ and $t\in \{1, 2, \dots, m\}$. The sets $A_{j,t}$ are pairwise disjoint and $\mathcal C $ is the union $\cup A_{j,t}$ where $t=1, \dots, m$ and $j=0, 1, \dots, 2n$. 

\subsection{Aggregation} \label{aggregate}
It is easy to see that the closure of the set $A_{j, t}$ is contained in the union
\begin{eqnarray}\label{12}
\overline{A_{j,t}} \subset \bigcup_{j'\le j, \, \, t'\le t}A_{j', t'}.
\end{eqnarray}
We shall describe below in this section a continuous section $s_{j,t}$ defined over each set $A_{j,t}$. Setting 
\begin{eqnarray}\label{13}
W_c= \bigcup_{j+t=c} A_{j,t}, \quad c=1, 2, \dots, 2n+m.
\end{eqnarray}
we obtain, using (\ref{12}), that each set $A_{j,t}$ with $j+t=c$ is open and closed in $W_c$. Therefore the sections 
$s_{j,t}$ with $j+t=c$ collectively define a continuous section $s_c=\sqcup s_{j,t}$ on $W_c$. 

Thus we obtain a parametrized motion planning algorithm $s=\sqcup s_c$ with partition $\mathcal C=\bigsqcup_{c=1}^{2n+m} W_c$ onto $2n+m$ subsets, which is optimal according to Theorem \ref{thm111}.

\subsection{The generic case} 
Consider the set $A_{2n, t}\subset \mathcal C$ where $t=1, \dots, m$. The configurations 
$C\in A_{2n, t}$ are characterised by the property that the projection points $\q(z_i), \q(z'_i),  \in\,  L $ are all pairwise distinct and are distinct from the projections of the obstacles and the set of projections of the obstacles $\{\q(o_i); i=1, \dots, m\}$ has cardinality $t$. 

The space $A_{2n, m}$ is open and dense in $\mathcal C$; it has many connected components which we shall now describe. 

Let $\Sigma_{n+m}$ denote the set of linear orderings of $n+m$ symbols $\q(z_1), \dots, \q(z_n), \q(o_1), \dots, \q(o_m)\in L$. The cardinality of the set $\Sigma_{n+m}$ equals $(n+m)!$. Every configuration $C\in A_{2n, m}$ determines two orderings of $n+m$ symbols: 
$\q(z_1), \dots, \q(z_n), \q(o_1), \dots, \q(o_m)$ and $\q(z'_1), \dots, \q(z'_n), \q(o_1), \dots, \q(o_m)$ and these two orderings restrict to the same ordering of the symbols $\q(o_1), \dots, \q(o_m)$. 

We denote by $\mathcal P_{n,m}$ the set of all pairs $(\sigma, \sigma')\in \Sigma_{n+m}\times\Sigma_{n+m}$ which restrict to the equal orderings of the symbols $\q(o_1), \dots, \q(o_m)$.
The following statement is obvious:

\begin{lemma}\label{lm3}
(a) The connected components of the set $A_{2n, m}$ are in one-to one correspondence with the set $\mathcal P_{n,m}$. (b) Every connected component of $A_{2n,m}$ is a convex subset of an Euclidean space and hence is contractible. 
\end{lemma}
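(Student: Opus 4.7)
\medskip

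The plan is to build an explicit bijection between $\pi_0(A_{2n,m})$ and $\mathcal P_{n,m}$, and then show that each preimage is convex, which kills two birds with one stone: convexity gives contractibility (part (b)) and also forces each preimage to be a single connected component (part (a)).

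For part (a), I would begin by defining a map
\[
\Phi \colon A_{2n,m} \to \mathcal P_{n,m}
\]
that sends a configuration $C = (z_1,\dots,z_n,z'_1,\dots,z'_n,o_1,\dots,o_m)$ to the pair $(\sigma_C, \sigma'_C)$, where $\sigma_C$ is the linear ordering on $L$ of the $n+m$ distinct points $\q(z_1),\dots,\q(z_n),\q(o_1),\dots,\q(o_m)$ and $\sigma'_C$ is the analogous ordering using $\q(z'_i)$ in place of $\q(z_i)$. By the definition of $A_{2n,m}$ these $2n+m$ projections are pairwise distinct, so $\sigma_C$ and $\sigma'_C$ are well-defined; they tautologically restrict to the same ordering on $\q(o_1),\dots,\q(o_m)$, so $\Phi(C) \in \mathcal P_{n,m}$. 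The map $\Phi$ is locally constant on $A_{2n,m}$, because the ordering can change only when two of the projected points collide, and this never happens inside $A_{2n,m}$. Hence $\Phi$ factors through a map $\pi_0(A_{2n,m}) \to \mathcal P_{n,m}$, and surjectivity is immediate: given any $(\sigma,\sigma')\in\mathcal P_{n,m}$, choose real numbers on $L$ realising both orderings (consistent on the $o_j$'s), and take the perpendicular components to be zero. Injectivity will follow from part (b).

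For part (b), fix $(\sigma,\sigma')\in\mathcal P_{n,m}$ and let $A_{\sigma,\sigma'}:=\Phi^{-1}(\sigma,\sigma')$. Write each point in $\R^d$ in coordinates relative to $L$: $z_i = \q(z_i)\cdot e + w_i$, and similarly $z'_i = \q(z'_i)\cdot e + w'_i$, $o_j = \q(o_j)\cdot e + v_j$, where $w_i,w'_i,v_j$ lie in the $(d-1)$-dimensional orthogonal complement of $e$. In these coordinates $A_{\sigma,\sigma'}$ is cut out of $(\R^d)^{2n+m}$ by finitely many strict linear inequalities on the scalars $\q(z_i),\q(z'_i),\q(o_j)$ (encoding the orderings $\sigma$ and $\sigma'$), with the vectors $w_i,w'_i,v_j$ subject to no constraint at all. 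The only subtlety is the configuration-space conditions $z_i\ne z_j$, $z'_i\ne z'_j$, $o_i\ne o_j$, $z_i\ne o_j$, $z'_i\ne o_j$; but each of these is automatically implied by the corresponding projections being distinct (since $\q$ is a function). Thus $A_{\sigma,\sigma'}$ is the intersection of an open half-space arrangement in $\R^{2n+m}$ (the scalar directions) with the full Euclidean space $(\R^{d-1})^{2n+m}$ (the perpendicular directions), hence a convex subset of a Euclidean space, and so path-connected and contractible.

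Convexity then closes the loop: each fibre $A_{\sigma,\sigma'}$ is path-connected, so $\Phi$ is injective on $\pi_0$, establishing the bijection in (a), while contractibility gives (b). I do not anticipate a real obstacle here; the only point that needs care is the bookkeeping verification that all non-collision conditions of the configuration space are automatic on $A_{2n,m}$, which is why the projection condition alone suffices to describe each component by linear inequalities.
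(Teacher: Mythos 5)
Your construction of $\Phi$, its local constancy, and its surjectivity are all fine. The gap is in the convexity claim for the fibre $A_{\sigma,\sigma'} = \Phi^{-1}(\sigma,\sigma')$. You correctly turn the orderings $\sigma$ and $\sigma'$ into strict linear inequalities among the scalar projections, and you correctly observe that the pairwise-distinctness conditions of the ambient configuration space are implied by distinctness of projections. But you have dropped a constraint that is part of the definition of $A_{2n,m}$: there the set $\q(C)$ must have cardinality $2n+m$, so in particular $\q(z_i)\ne\q(z'_j)$ for \emph{every} pair $i,j$. The pair $(\sigma,\sigma')$ records the ordering of $\{\q(z_i),\q(o_j)\}$ and the ordering of $\{\q(z'_i),\q(o_j)\}$, but says nothing about the relative position of a $\q(z_i)$ and a $\q(z'_j)$. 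Those relative positions vary freely within $\Phi^{-1}(\sigma,\sigma')$, so the fibre is the set cut out by the linear inequalities encoding $\sigma,\sigma'$ \emph{together with} the inequations $\q(z_i)\ne\q(z'_j)$, and the latter are not convex conditions. Your coordinate description therefore describes a strictly larger convex set, not $\Phi^{-1}(\sigma,\sigma')$ itself.

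Concretely, with $n=1$, $m=2$, the configurations satisfying $\q(o_1)<\q(z_1)<\q(z'_1)<\q(o_2)$ and those satisfying $\q(o_1)<\q(z'_1)<\q(z_1)<\q(o_2)$ determine the same pair $(\sigma,\sigma')$, yet the straight-line segment between them passes through $\q(z_1)=\q(z'_1)$, which lies outside $A_{2n,m}$; so $\Phi^{-1}(\sigma,\sigma')$ is disconnected. The invariant that does detect components is the full linear ordering of all $2n+m$ projection points, and each such full ordering cuts out a genuinely convex set; this gives part (b), but there are $(2n+m)!$ of these orderings, which exceeds $|\mathcal P_{n,m}|=((n+m)!)^2/m!$ as soon as $n\ge 1$ (for instance $24$ vs.\ $18$ when $n=1$, $m=2$). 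So the bijection asserted in part (a) does not actually hold as stated; what \emph{is} true, and what the algorithm of \S 6 actually uses, is that the $A_{\sigma,\sigma'}$ are open-and-closed in $A_{2n,m}$ and are indexed by $\mathcal P_{n,m}$. The paper gives no proof (it labels the lemma obvious), so there is nothing there to reconcile against, but your convexity step does not repair the statement; you would either need to replace $\mathcal P_{n,m}$ by the set of full orderings, or replace ``connected components'' by ``clopen pieces of the decomposition.''
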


The cardinality of the set $\mathcal P_{n,m}$, which equals $\frac{((n+m)!)^2}{m!}$, grows rapidly: 
for $n=2=m$ it is 288,
however for $n=5$ and $m=3$ (i.e. when one is dealing with 5 robots and 3 obstacles) we have 
$$\frac{((n+m)!)^2}{m!} = \frac{(8!)^2}{3!}= 270,950,400.$$
In other words, for $n=5$ and $m=3$ the space $A_{2n, m}$ has 270,950,400
connected components. This fact is a reflection of the real geometric complexity of the problem. 

\subsection{Sections and fibrewise deformations}\label{sections} 
We shall use fiberwise deformations to describe sections of the fibration $\Pi: E^I_B \to \mathcal C$ as we explain below.
Suppose that we have constructed a continuous section $s$ of $\Pi$ over a subset $A\subset \mathcal C$ 
and another subset $A'\subset \mathcal C$ 
can be continuously deformed into $A$ in a fibrewise manner. 
This means that there exists a continuous deformation
$h: A'\times I\to \mathcal C$ such that for every $(e, e') \in A'$ one has $h((e, e'),0)=(e, e')$, \, $h((e, e'), 1)\in A$ and, besides, the point $\hat p(h((e, e'),t))\in B$ does not depend on $t\in I$; here $\hat p:\mathcal C\to B$ denotes the projection. 
We may  write
$h((e,e'),t)= (h^1((e,e'),t), h^2((e,e'),t))$ where $h^r((e,e'),t)\in E$ for $r=1, 2$; in particular, $h^1((e,e'),0)=e$ and $h^2((e,e'),0)=e'$. Then one constructs a continuous section $s'$ over $A'$ as follows: 
\begin{eqnarray}\label{section}
s'(e,e')(t) =
\left\{
\begin{array}{lll}
h^1((e,e'),3t), & \mbox{for} & 0\le t\le 1/3,\\ \\
s(h((e,e'),1))(3t-1),\, & \mbox{for} & 1/3\le t\le 2/3,\\ \\
h^2((e,e'),(3-3t)),& \mbox{for} & 2/3\le t\le 1.
\end{array}
\right.
\end{eqnarray} 
Note that the fibrewise property of the deformation can equivalently be expressed by saying that the external conditions (i.e. the obstacles) remain stationary during the deformation. 

\subsection{Sets $A_{2n, t}$}\label{gen2} Similarly to the discussion preceeding Lemma \ref{lm3}, 
for any $1\le t\le m$ we may consider generalised orderings of the symbols $\q(z_1), \dots, \q(z_n), \q(o_1), \dots,\q(o_m)$ allowing some projections of the obstacles to be equal to each other. If the number of distinct projections of the obstacles is $t$, we shall denote by $\mathcal P_{n, m}^t$ the number of pairs of such generalised orderings
$(\sigma, \sigma')$ inducing the identical orderings of the projections of the obstacles $\q(o_1), \dots, \q(o_m)$. 
This leads to the decomposition
\begin{eqnarray}\label{dec}
A_{2n, t} \, =\, \bigsqcup_{(\sigma, \sigma')\in \mathcal P_{n,m}^t} A_{\sigma, \sigma'},
\end{eqnarray}
where the symbol $A_{\sigma, \sigma'}\subset A_{2n, t}\subset \mathcal C$ 
denotes the set of configurations 
(\ref{conf}) such that the ordering of the set $\q(z_1), \dots, \q(z_n), \q(o_1), \dots, \q(o_m)$ is $\sigma$ while  the set $\q(z'_1), \dots, \q(z'_n), \q(o_1), \dots, \q(o_m)$ has ordering $\sigma'$. In view of (\ref{12}),
each of the sets 
$A_{\sigma, \sigma'}$ is open and closed in $A_{2n, t}$.

Consider a component $A_{\sigma, \sigma'}$ of (\ref{dec}) with $\sigma=\sigma'$. 
In this case the projection points 
$$\q(z_1), \dots, \q(z_n), \q(o_1), \dots, \q(o_m)\quad\mbox{and} \quad \q(z'_1), \dots, \q(z'_n), \q(o_1), \dots, \q(o_m)$$ are in the same ordering and therefore we may define the following affine parametrized 
deformation 
\begin{eqnarray}\label{aff}
z_i(t)=(1-t)z_i+tz'_i,\quad \mbox{ for}\quad  i=1, \dots, n, \quad \mbox{and} \quad t\in I.\end{eqnarray} 
For $i\not=j$, 
one has $\q(z_i)< \q(z_j)$ 
 if and only if 
$\q(z'_i)< \q(z'_j)$ and therefore for any $t\in [0,1]$, it holds $\q(z_i(t)) < \q(z_j(t))$ implying that 
$z_i(t)\not=z_j(t)$. Similarly one shows that for any $t\in I$ one has $z_i(t)\not= o_j$. Thus, the affine patametrized deformation (\ref{aff}) defines a continuous section of $\Pi$ over 
every component $A_{\sigma, \sigma'}$ of (\ref{dec}) with $\sigma=\sigma'$. 

\subsection{Swapping deformations}\label{gen3} Consider now a triple of generalised orderings $\sigma, \sigma', \sigma''$
of symbols $\q(z_1), \dots, \q(z_n), \q(o_1), \dots, \q(o_m)$ where, as in \S \ref{gen2}, we allow some projections of the obstacles $\q(o_j)$ to coincide with each other while requiring for the total number of distinct projections of the obstacles to be $t$ (where $1\le t\le m$) and for the total number of distinct projection points to be $2n+t$. We shall assume that $\sigma$ and $\sigma'$ are obtained from each other either (Case A) by reversing the order of projections of two adjacent symbols $\q(z_i), \q(z_j)$ or (Case B) by reversing the order 
of two adjacent symbols $\q(z_i), \q(o_j)$. Under these assumptions we shall describe an explicit parametrized deformation $h: A_{\sigma, \sigma''}\times I\to A_{\sigma', \sigma''}$. 

%
%
%
%
%

Consider first the Case A. Suppose that we have $\q(z_i)<\q(z_j)$ and the interval $(\q(z_i),\q(z_j))\subset L$ contains no projections $\q(z_k)$, $\q(z_k')$ for $k=1, \dots, n$ and $\q(o_\ell)$ for $\ell=1, \dots, m$. 
We can define the following parametrized deformation 
$$
z_i(t) = \left\{
\begin{array}{ll}
(1-3t)z_i+3t \q(z_i), & \hskip0.5 cm 0\le t\le 1/3, \\ \\
\frac{z_i+z_j}{2} - |\q(z_j)-\q(z_i)|\cdot \left[\cos((3t-1)\pi)\cdot e + \sin ((3t-1)\pi)\cdot e^\perp\right], & \hskip 0.5 cm
1/3\le t\le 2/3,\\ \\
(3-3t)\q(z_j)+(3t-2)z_j,& \hskip0.5 cm 2/3\le t\le 1.
\end{array}
\right.
$$
and 
$$
z_j(t) = \left\{
\begin{array}{ll}
(1-3t)z_j+3t \q(z_j), & \hskip0.5 cm 0\le t\le 1/3, \\ \\
\frac{z_i+z_j}{2} + |\q(z_j)-\q(z_i)|\cdot \left[\cos((3t-1)\pi)\cdot e + \sin ((3t-1)\pi)\cdot e^\perp\right], & \hskip0.5 cm
1/3\le t\le 2/3,\\ \\
(3-3t)\q(z_i)+(3t-2)z_i,& \hskip0.5 cm 2/3\le t\le 1,
\end{array}
\right.
$$
\begin{figure}[h]
\begin{center}
\includegraphics[scale=0.4]{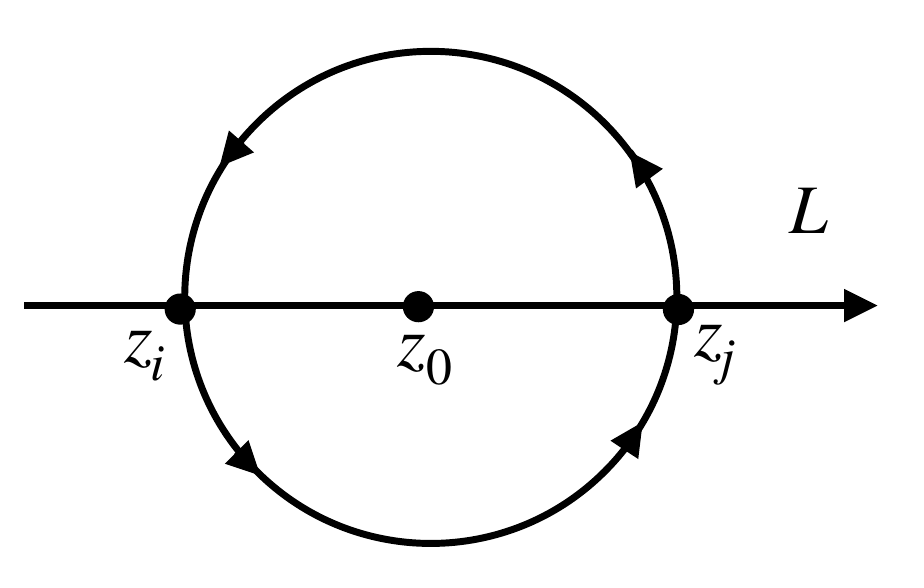}
\caption{Two adjacent $z$ symbols on $L$ interchage their positions.}\label{casea}
\end{center}
\end{figure}
as well as $z_k(t)\equiv z_k$ for all $k\not=i, j$; besides, we set $z'_r(t)\equiv z'_r$ for all $r$. The movement of the point $z_i(t)$ consists of 3 parts: first it slides to its projection $\q(z_i)$, then the circular movement takes it to $\q(z_j)$ and finally reverse affine projection takes it to $z_j$. The point $z_j(t)$ moves in a similar fashion but during the second step it moves in the opposite circular direction to avoid meeting $z_i(t)$, see Figurer \ref{casea}. We see that for any $t\in I$ the configuration $(z_1(t), \dots, z_n(t), z'_1, \dots, z'_n, o_1, \dots, o_m)$ lies in the configuration space $\mathcal C$ and we obtain a deformation of the component $A_{\sigma, \sigma''}$ in $\mathcal C$ ending at the component $A_{\sigma', \sigma''}$. 

Consider now the Case B, i.e. when the adjacent symbols $\q(z_i)$ and $\q(o_j)$ are swapped in two orderings $\sigma, \sigma'$. We know that the open interval between $\q(z_i)$ and $\q(o_j)$ contains no projection points while there could be some other obstacles $o_k$ with $\q(o_k)=\q(o_j)$. For simplicity we shall assume that $\q(o_j)< \q(z_i)$; the opposite case follows similarly. 
We denote by 
$$\eta=\eta(z_1, \dots, z_n, z'_1, \dots, z'_n, o_1, \dots, o_m)>0$$ the largest real number with the property 
that the open interval 
$(\q(o_j), \q(o_j)-\epsilon)$ contains no projection points $\q(z_\ell)$, $\q(z'_\ell)$ and $\q(o_\ell)$ 
and the open ball of radius $\eta$ with centre $\q(o_j)$ contains no obstacles $o_k$ satisfying $\q(o_k)=\q(o_j)$. 
Note that $\eta$ is a continuous function on $A_{2n, t}$. 

Let $\q_j$ denote the orthogonal projection onto the line $L_j$ passing through $o_j$ and parallel to $L$. 
We define the following deformation of the configuration $(z_1, \dots, z_n, z'_1, \dots, z'_n, o_1, \dots, o_m)$ with only the point $z_i$ moving as follows:

$$
z_i(t) = \left\{
\begin{array}{ll}
(1-3t)z_i + 3t\q_j(z_i), & \hskip 0.5cm 0\le t\le 1/3,\\  \\

(2-3t)\cdot \q_j(z_i) + (3t-1)\cdot \left[\q_j(o_j)+\eta/2\right], & \hskip 0.5cm 1/3\le t\le 2/3,\\ \\

\q_j(o_j)- \eta/2 \cdot\left[ \cos(\pi(3t-2))\cdot e +\sin(\pi(3t-2))\cdot e^\perp\right], &  \hskip 0.5cm 2/3\le t\le 1. 
\end{array}
\right. 
$$
On the first step the point $z_i$ moves to its projection $\q_j(z_i)$ onto the line $L_j$. 
The second movement is along the line $L_j$ to the point $\q_j(o_j)+\eta/2$. The third movement 
is the circular motion around the obstacle $o_j=\q_j(o_j)$ along the circle of radius $\eta/2$. 
The final point of this deformation is $z_i(1) = \q_j(o_j)-\eta/2$, i.e. $\q(z_i(1))<\q(o_j)$; in other words, the inequality $\q(o_j) < q(z_i)$ becomes reversed. 


\subsection{The section $s_{2n, t}$} In this subsection we shall describe the section $s_{2n, t}$ over 
$A_{2n, t}$. We know that $A_{2n, t}$ is the disjoint union (\ref{dec}) and hence the section $s_{2n, t}$ is determined by its restriction $s_{\sigma, \sigma'}$ on $A_{\sigma, \sigma'}$. We already described the sections $A_{\sigma, \sigma'}$ for $\sigma=\sigma'$, see \S \ref{gen2}. 

For any pair of orderings $\sigma, \sigma'$ we can find a sequence of orderings $\sigma_1, \dots, \sigma_k$ such that $\sigma_1=\sigma$, $\sigma_k=\sigma'$ and the orderings $\sigma_i$ and $\sigma_{i+1}$ are related either by swapping the order of a pair of adjacent symbols $\q(z_i)$ and $\q(z_j)$ or by swapping the order of 
$\q(z_i)$ and $\q(o_j)$, see \S \ref{gen3}. 

The deformation of \S\ref{gen3} produces a sequence of deformations $$A_{\sigma, \sigma'}=
A_{\sigma_1,\sigma_k}\to A_{\sigma_2, \sigma_k}\to \dots \to A_{\sigma_k, \sigma_k}.$$
Applying the concatenation of these deformations to the constructions of \S \ref{sections} and \S \ref{gen2}, we 
obtain a continuous section $s_{\sigma, \sigma'}$ over each set $A_{\sigma, \sigma'}$. 
Thus we obtain the section $s_{2n, t} = \sqcup s_{\sigma, \sigma'}$ for any $t=1, \dots, m$.  

\subsection{Desingularization} Next we describe continuous fiberwise deformations
\begin{eqnarray}\label{fjt}
F_{j,t}: A_{j,t}\times I \to A_{2n,t} \quad \mbox{for every}\quad j=0, 1, \dots, 2n-1 \quad\mbox{and} \quad t=1, \dots, m.
\end{eqnarray}
For a configuration $C$ as in (\ref{conf}), consider all positive real numbers of the form
$$|\q(z_i)-\q(z_j)|, \quad |\q(z'_i)-\q(z'_j)|, \quad |\q(z_i)-\q(o_j)|, \quad |\q(z'_i)-\q(o_j)|$$
and let $M(C)>0$ denote their minimum. Note that $M(C)$ is a continuous function of $C\in A_{j,t}$. We define the deformation (\ref{fjt}) by the formulae
$$z_i(t) =z_i +\frac{(i-1)\cdot t\cdot M(C)}{2n}\cdot e, \quad z'_i(t) =z'_i +\frac{(n+i-1)\cdot t\cdot M(C)}{2n}\cdot e, \quad t\in [0,1].$$
Since $\q(z_i(t)) = \q(z_i)+ t\cdot (i-1)M(C)/2n$, it is obvious that the configuration \newline
$(z_1(t), \dots, z_n(t), z'_1(t), \dots, z'_n(t), o_1, \dots, o_m)$ lies in $A_{2n, t}$ for any $t>0$. 

Applying the construction of \S \ref{section} and the sections $s_{2n, t}$ constructed earlier, we obtain a continuous section $s_{j,t}$ of the fibration $\Pi$ over each set $A_{j, t}$. 

As we mentioned earlier in \S \ref{aggregate}, these sections combine and yield continuous sections 
$$s_c=\bigsqcup_{j+t=c} s_{j, t}, \quad c=1, 2, \dots, 2n+m$$
over $2n+m$ subsets $W_c$ partitioning $\mathcal C$. Hence, we obtain a parametrized motion planning algorithm $s= \bigsqcup_{c=1}^{2n+m} s_c$ which is optimal according to Theorem \ref{thm11}.

\section{Motion planning algorithm in even dimensions}\label{sec71}

In this section we shall briefly describe an explicit  parametrized motion planning algorithm for collision free motion of $n$ robots in the presence of $m\ge 2$ obstacles in the Euclidean space $\Bbb R^d$ where the dimension $d\ge 2$ is even. 
This algorithm is a minor modification of the algorithm of \S \ref{sec6}, but it has
$2n+m-2$ local rules, i.e. one less than the general algorithm of \S \ref{sec6}. 

The main result of \cite{cfw2} implies that the algorithm we describe below is optimal for $d\ge 2$ even. 

It is well known that for $d$ even the unit sphere $S^{d-1}\subset \R^d$ admits a continuous non-vanishing tangent vector field, see \cite{Sp}. This means that we may continuously assign to every unit vector $e\in \R^d$ a unit vector $e^\perp\in \R^d$ which is perpendicular to $e$. 

Using this remark we modify the constructions of the sets $A_{j,t}$ of \S \ref{sec62} as follows. Given a configuration (\ref{conf}), consider the unit vector $e$ in the direction $o_2-o_1$ and the line $L$ from the origin parallel to $e$. Repeating the construction of \S \ref{sec62} we shall obtain the sets $A_{j,t}$, which partition the whole configuration space, where $j\in \{0, 1, \dots, 2n\}$ and $t\in \{2, \dots, m\}$: the number $t$ of distinct projections of the obstacles onto $L$ is at least $2$. 
Hence the quantity $c=j+t$ (which appears in (\ref{13})) takes $2n+m-2$ distinct values $2, 3, \dots, 2n+m$. 

The swapping deformations of \S \ref{gen3} use the vector $e^\perp$ (depending on $e$) indicating the direction for a manoeuvre to avoid collisions. 

All other constructions and arguments of \S \ref{sec6} remain unchanged.

\section{Parametrized topological complexity of sphere bundles and the Stiefel - Whitney characteristic classes}

The results described in this and the following sections develop further the mathematical foundations of the method of parametrized motion planning algorithms and parametrized topological complexity.

Consider a locally trivial vector bundle $\xi: E\to B$ of rank $q\ge 2$. For $b\in B$ the fiber $\xi^{-1}(b)$ of $\xi$ are a real vector space of dimension $q$. Note that we do not assume that the bundle $\xi$ is orientable. 
It is known that every vector bundle over a paracompact base $B$ admits a Riemannian structure, 
i.e. a positive definite scalar product on each fibre. The space of all vectors of length 1 is denoted $\dot E\subset E$ and the map $\dot\xi: \dot E\to B$  (defined as the restriction of $\xi$) is called {\it the unit sphere bundle} determined by $\xi$. Our goal in this and in the following section is to study the parametrized topological complexity of the sphere bundles. Recall that in the standard (non-parametrized) setting the topological complexity of spheres is 1 or 2 depending on the parity of the dimension, see \cite{far03}, Theorem 8 (note that \cite{far03} was operating with the non-reduced version of $\tc$, it is higher by 1 compared with the definitions of this paper). 

For simplicity we shall assume below that the base $B$ is a finite CW-complex. 

The upper bound (\ref{upper}) gives 
\begin{eqnarray}\label{up}
\tc[\dot \xi: \dot E\to B] < 2+\frac{\dim B+1}{q-1}
\end{eqnarray}
for any spherical bundle $\dot \xi: \dot E\to B$ with fibre the sphere of dimension $q-1$. 

The lower bound of the parametrized topological complexity will use the Stiefel - Whitney classes. 
Recall that every rank $q$ vector bundle $\xi: E\to B$ determines a sequence of Stiefel - Whitney characteristic classes, (see \cite{ms}):
$\w_i(\xi)\in H^i(B;\Z_2)$ where $ i=1, 2, \dots, q.$ 

\begin{theorem}\label{thm2} The parametrized topological complexity of the unit sphere bundle $\dot \xi: \dot E\to B$ satisfies
\begin{eqnarray}\label{one}
\TC[\dot\xi: \dot E\to B] \, \ge\,  {\h}(\w_{q-1}(\xi)|\, \w_q(\xi))+1,
\end{eqnarray}
where the symbol ${\h}(\w_{q-1}(\xi)|\, \w_q(\xi))$ denotes the relative height of the Stiefel -- Whitney class 
$\w_{q-1}(\xi)\in H^{q-1}(B;\Z_2)$ with respect to $\w_q(\xi)\in H^{q}(B;\Z_2)$. 
\end{theorem}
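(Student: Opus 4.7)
The plan is to apply the cup-length lower bound of Proposition~\ref{prop2} (with coefficients $R = \Z_2$) by exhibiting a single class $u \in H^{q-1}(\dot E \times_B \dot E;\Z_2)$ in $\ker \Delta^*$ whose $(n+1)$st power is nonzero, where $n = \h(\w_{q-1}(\xi)\,|\,\w_q(\xi))$, read as the largest integer $k$ such that $\w_{q-1}(\xi)^k$ does not lie in the ideal $\w_q(\xi)\cdot H^*(B;\Z_2)$.

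First I construct the key class. The fibrewise diagonal $\Delta_B : \dot E \hookrightarrow \dot E \times_B \dot E$ is a closed submanifold whose normal bundle is the vertical tangent bundle $T^v \dot E$. From the standard isomorphism $T^v \dot E \oplus \underline{\R} \cong \dot\xi^*\xi$ (the trivial summand is the radial direction), one has $\w_i(T^v \dot E) = \dot\xi^* \w_i(\xi)$ for $0\le i\le q-1$. The mod~$2$ Thom class of this normal bundle yields a class $v \in H^{q-1}(\dot E \times_B \dot E;\Z_2)$ with $\Delta_B^* v = \dot\xi^*\w_{q-1}(\xi)$. Writing $\phi = \dot\xi \circ \pi_i$ (independent of $i=1,2$) and setting
\[
u := v + \phi^*\w_{q-1}(\xi),
\]
one checks $\Delta^* u = \dot\xi^*\w_{q-1}(\xi) + \dot\xi^*\w_{q-1}(\xi) = 0 \pmod{2}$, so $u \in \ker \Delta^*$.

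Powers of $u$ are controlled by a self-intersection computation. Writing $v = (\Delta_B)_!(1)$ and using the projection formula together with $\phi\circ\Delta_B = \dot\xi$ gives $v^2 = (\Delta_B)_!(\Delta_B^* v) = v\cdot \phi^*\w_{q-1}(\xi)$; hence modulo~$2$,
\[
u^2 = v^2 + \phi^*\bigl(\w_{q-1}(\xi)^2\bigr) = u\cdot\phi^*\w_{q-1}(\xi),
\]
and inductively $u^k = u \cdot \phi^*\bigl(\w_{q-1}(\xi)^{k-1}\bigr)$ for every $k\ge 1$. To detect nonvanishing I invoke Leray--Hirsch for the projection $\pi_2 : \dot E \times_B \dot E \to \dot E$: this map is the pullback of $\dot\xi$ along itself, so its mod~$2$ Euler class $\dot\xi^*\w_q(\xi)$ vanishes in $H^q(\dot E;\Z_2)$ by exactness of the Gysin sequence for $\dot\xi$. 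Consequently the Gysin sequence for $\pi_2$ splits, $\pi_2^*$ is injective, and since $v$ restricts to the top class of each fibre $S^{q-1}$ of $\pi_2$ (the diagonal meets each such fibre transversally at a single point), $H^*(\dot E\times_B\dot E;\Z_2)$ becomes a free $H^*(\dot E;\Z_2)$-module on $\{1,v\}$ via $\pi_2^*$.

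Finally, expanding $u^{n+1}$ in this basis yields
\[
u^{n+1} = v\cdot \pi_2^*\dot\xi^*\bigl(\w_{q-1}(\xi)^{n}\bigr) + \pi_2^*\dot\xi^*\bigl(\w_{q-1}(\xi)^{n+1}\bigr).
\]
By choice of $n$, $\w_{q-1}(\xi)^{n}$ is not divisible by $\w_q(\xi)$, and Gysin exactness for $\dot\xi$ then gives $\dot\xi^*\bigl(\w_{q-1}(\xi)^{n}\bigr) \neq 0$ in $H^*(\dot E;\Z_2)$; injectivity of $\pi_2^*$ forces the $v$-coefficient nonzero, so by Leray--Hirsch $u^{n+1}\neq 0$. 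Applying Proposition~\ref{prop2} with the $(n+1)$-fold product $u\smile\cdots\smile u$ inside $\ker\Delta^*$ delivers $\tc[\dot\xi:\dot E\to B]\ge n + 1 = \h(\w_{q-1}(\xi)\mid\w_q(\xi)) + 1$. The main obstacle is the third step: verifying Leray--Hirsch cleanly and identifying the fibrewise diagonal class $v$ as the Leray--Hirsch generator for $\pi_2$, which means reconciling the Thom/Euler-class data of two different (but related) sphere bundles that share $v$ as vertical generator.
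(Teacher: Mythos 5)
Your argument is correct and is in essence the same proof the paper gives, just with the cited black box opened up. The paper observes that the fibrewise diagonal $\Delta$ is a section of the unit sphere bundle $\dot\zeta\colon\dot E\times_B\dot E\to\dot E$ of the pullback bundle $\zeta=\dot\xi^*\xi$, invokes Theorem~\ref{cor11} (from \cite{fw}) to identify the cup-length of $\ker\Delta^*$ with $\h(\w_{q-1}(\zeta))+1$, and then uses the Gysin sequence for $\dot\xi$ (whose Euler class is $\w_q(\xi)$, and whose kernel on cohomology is the ideal $(\w_q(\xi))$) to convert $\h(\w_{q-1}(\zeta))$ into the relative height $\h(\w_{q-1}(\xi)\mid\w_q(\xi))$. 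You reconstruct the cup-length class explicitly: your $u=v+\phi^*\w_{q-1}(\xi)$ is precisely the class that the proof of Theorem~\ref{cor11} produces, the inductive identity $u^k=u\cdot\phi^*(\w_{q-1}(\xi)^{k-1})$ is the self-intersection/projection-formula step, and your final use of Leray--Hirsch for $\pi_2$ together with Gysin exactness for $\dot\xi$ (showing $\dot\xi^*(\w_{q-1}(\xi)^n)\neq0$ because $\w_{q-1}(\xi)^n\notin(\w_q(\xi))$) is exactly the paper's step $\h(\w_{q-1}(\zeta))=\h(\w_{q-1}(\xi)\mid\w_q(\xi))$. Two small observations: you only need (and only prove) the inequality $\mathrm{cup\text{-}length}(\ker\Delta^*)\ge\h+1$, which is more economical than the equality in Theorem~\ref{cor11}; and since the paper assumes $B$ is merely a finite CW complex, the ``closed submanifold / normal bundle / Thom class'' language should be recast as the Thom isomorphism for the pair $(\dot E\times_B\dot E,\,\dot E\times_B\dot E\setminus\Delta(\dot E))$ associated to the sectioned sphere bundle $\pi_2$, with the relevant fibrewise normal data being the vertical tangent bundle $T^v\dot E\cong\dot\xi^*\xi/\underline{\R}$ --- this is routine but should be stated without invoking smooth structure.
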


If $\w_{q-1}(\xi)\not=0\in H^{q-1}(B;\Z_2)$, {\it the relative height} ${\h}(\w_{q-1}(\xi)|\w_q(\xi))$
is defined as the largest integer $k\ge 1$ such that the $k$-th power $\w_{q-1}(\xi)^k\, \in \, H^{k(q-1)}(B;\Z_2)$ does not belong to the ideal generated by $\w_q(\xi)$; the relative height of the trivial class is defined as the zero. 

The proof of Theorem \ref{thm2} will use the following statement proven in \cite{fw} as Corollary 12. 

\begin{theorem}\label{cor11}
Let $\xi: E\to B$ be a rank $q\ge 2$ vector bundle (not necessarily orientable). Let $s: B \to \dot E$ be a continuous section of the unit sphere bundle. Then the cup-length of the kernel of the induced homomorphism 
$\ker[s^\ast: H^\ast(\dot E; \Z_2) \to H^\ast(B;\Z_2)$ equals $\h(\w_{q-1}(\xi))+1$. 
\end{theorem}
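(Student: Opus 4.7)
My plan is to analyze the cohomology ring $H^\ast(\dot E;\Z_2)$ via the Gysin sequence of the sphere bundle, produce a distinguished generator normalized by the section $s$, and then exploit the classical quadratic relation expressing the square of that generator in terms of Stiefel--Whitney classes.

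The first step observes that the existence of a section forces the mod-2 Euler class $\w_q(\xi)\in H^q(B;\Z_2)$ to vanish: in the Gysin sequence
$$\cdots \to H^{i-q}(B;\Z_2) \xrightarrow{\cup \w_q(\xi)} H^i(B;\Z_2) \xrightarrow{\dot\xi^\ast} H^i(\dot E;\Z_2) \to \cdots,$$
the identity $s^\ast\circ\dot\xi^\ast=\mathrm{id}$ makes $\dot\xi^\ast$ injective, so $\cup\w_q(\xi)=0$ and in particular $\w_q(\xi)=\w_q(\xi)\cdot 1=0$. By Leray--Hirsch (equivalently, collapse of the Serre spectral sequence at $E_q$), $H^\ast(\dot E;\Z_2)$ is then a free $H^\ast(B;\Z_2)$-module on a basis $\{1,\tilde\beta\}$, where $\tilde\beta\in H^{q-1}(\dot E;\Z_2)$ restricts to the fiber generator. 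Such a lift $\tilde\beta$ is unique up to addition of a class in $\dot\xi^\ast H^{q-1}(B;\Z_2)$, so I can (and will) normalize by replacing $\tilde\beta$ with $\tilde\beta-\dot\xi^\ast(s^\ast\tilde\beta)$, producing the canonical representative with $s^\ast\tilde\beta=0$. With this normalization, every class has a unique decomposition $a+b\tilde\beta$ and $s^\ast(a+b\tilde\beta)=a$; hence $\ker s^\ast$ is precisely the $H^\ast(B;\Z_2)$-ideal generated by $\tilde\beta$.

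Next I will establish the quadratic relation $\tilde\beta^2 = \w_{q-1}(\xi)\cdot\tilde\beta$ in $H^{2(q-1)}(\dot E;\Z_2)$. The free-module decomposition writes $\tilde\beta^2=a+b\tilde\beta$ uniquely with $a\in H^{2(q-1)}(B;\Z_2)$ and $b\in H^{q-1}(B;\Z_2)$; applying $s^\ast$ and using $s^\ast\tilde\beta=0$ forces $a=0$. To identify the remaining coefficient as $b=\w_{q-1}(\xi)$ I will appeal to the classical mod-2 Wu-type relation for sphere bundles: $b$ is a universal characteristic class of rank-$q$ bundles equipped with a nowhere-vanishing section (so factors through $BO(q-1)\hookrightarrow BO(q)$), and a naturality computation on the universal example, or equivalently the splitting principle applied to a Whitney sum of line bundles plus a trivial summand, identifies $b$ with $\w_{q-1}(\xi)$.

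With the relation in hand the cup-length calculation is essentially formal. A product of $k$ elements of $\ker s^\ast$, each of the form $b_i\tilde\beta$ with $b_i\in H^\ast(B;\Z_2)$, equals
$$b_1\tilde\beta\cdot b_2\tilde\beta\cdots b_k\tilde\beta \;=\; b_1 b_2\cdots b_k\cdot\tilde\beta^k \;=\; b_1 b_2\cdots b_k\cdot\w_{q-1}(\xi)^{k-1}\cdot\tilde\beta$$
by iterating the quadratic relation. Freeness of $H^\ast(\dot E;\Z_2)$ over $H^\ast(B;\Z_2)$ shows this vanishes precisely when $b_1\cdots b_k\cdot\w_{q-1}(\xi)^{k-1}=0$ in $H^\ast(B;\Z_2)$; the supremum over admissible $k$ is attained by taking all $b_i=1$ and equals the largest $k$ with $\w_{q-1}(\xi)^{k-1}\neq 0$, i.e.\ $\h(\w_{q-1}(\xi))+1$. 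The principal technical obstacle is the universal identification of $b$ with $\w_{q-1}(\xi)$; the remaining steps are structural consequences of the Leray--Hirsch decomposition and are routine bookkeeping.
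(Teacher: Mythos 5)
There is nothing in this paper to compare your argument with: the statement is imported from \cite{fw} (where it appears as Corollary 12) and no proof is given here. Judged on its own, your proposal is the standard argument and is essentially correct: the vanishing of $\w_q(\xi)$ forced by the section, the resulting free $H^\ast(B;\Z_2)$-module basis $\{1,\tilde\beta\}$ of $H^\ast(\dot E;\Z_2)$, the normalization $s^\ast\tilde\beta=0$ which identifies $\ker s^\ast$ with $H^\ast(B;\Z_2)\cdot\tilde\beta$, and the reduction of the cup-length to $\h(\w_{q-1}(\xi))+1$ via $\tilde\beta^{\,k}=\w_{q-1}(\xi)^{k-1}\tilde\beta$ are all sound (including the degenerate case $\w_{q-1}(\xi)=0$, where both sides equal $1$). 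The one step you defer --- that the coefficient $b$ in $\tilde\beta^2=b\,\tilde\beta$ equals $\w_{q-1}(\xi)$ --- is genuinely the crux, and your proposed identification via the universal example over $BO(q-1)$ or the splitting principle still leaves a nontrivial computation (the cohomology of the sphere bundle of a sum of line bundles). A cleaner way to close it: since $\deg\tilde\beta=q-1$ one has $\tilde\beta^2=Sq^{q-1}\tilde\beta$; choosing $\tilde\beta$ so that the coboundary $\delta\colon H^{q-1}(\dot E;\Z_2)\to H^{q}(D(\xi),\dot E;\Z_2)$ sends it to the Thom class $U$ (possible because $\w_q(\xi)=0$, and such $\tilde\beta$ restricts to the fibre generator), the Thom--Wu formula $Sq^{q-1}U=\w_{q-1}(\xi)\smile U$ together with $\delta(\w_{q-1}(\xi)\,\tilde\beta)=\w_{q-1}(\xi)\smile U$ gives $\tilde\beta^2-\w_{q-1}(\xi)\,\tilde\beta\in\ker\delta={\rm im}\,\dot\xi^\ast$, and applying $s^\ast$ to your normalized class then kills the residual term. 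With that inserted, the proof is complete.
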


\begin{proof}[Proof of Theorem \ref{thm2}]
Consider the diagonal map $\Delta: \dot E\to \dot E\times_B\dot E$ and the kernel of the induced homomorphism $$\Delta^\ast: H^\ast(\dot E\times_B \dot E; \Z_2) \to H^\ast(\dot E; \Z_2).$$ 
Note that $\Delta$ is a section of the unit sphere bundle $\dot \zeta: \dot E\times_B\dot E\to \dot E$ 
of the vector bundle $\zeta: E\times_B \dot E\to \dot E$
(the projection on the first factor) we may apply Theorem \ref{cor11}. We obtain that the cup-length of the kernel 
$\ker[\Delta^\ast: H^\ast(\dot E\times_B \dot E; \Z_2) \to H^\ast(\dot E;\Z_2)]$ equals one plus the height of the Stiefel - Whitney class $\w_{q-1}(\zeta)$. We shall show below that $$\h(\w_{q-1}(\zeta))=
\h(\w_{q-1}(\xi)|\, \w_q(\xi)).$$ Once this 
has been established, the inequality (\ref{one}) follows from Proposition \ref{prop2}. 

Note that $\w_{q-1}(\zeta)\in H^{q-1}(\dot E;\Z_2)$ and $\w_{q-1}(\xi)\in H^{q-1}(B;\Z_2)$, i.e. these classes lie in different groups. 
From the Gysin exact sequence with $\Z_2$ coefficents we know that the homomorphism 
$\dot \xi^\ast: H^{q-1}(B;\Z_2) \to H^{q-1}(\dot E; \Z_2)$ is a monomorphism. We see that
\begin{eqnarray}\label{equal}
\dot \xi^\ast(\w_{q-1}(\xi))\, =\,  \w_{q-1}(\zeta)
\end{eqnarray}
which is a consequence of functoriality of the Stiefel - Whitney classes: 
since $\dot \xi^\ast(\xi) =\zeta$ we see that 
$\w_{q-1}(\zeta)\, =\, \w_{q-1}(\dot\xi^\ast(\xi))= \dot\xi^\ast(\w_{q-1}(\xi)).$ The Gysin exact 
sequence \cite{Sp} implies that the kernel of the homomorphism 
$\dot \xi^\ast: H^\ast(B;\Z_2)\to H^\ast(\dot E;\Z_2)$ coincides with the ideal generated by the class 
$\w_q(\xi)$. Thus we have $\dot\xi^\ast(\w_{q-1}(\xi)^k) = \w_{q-1}(\zeta)^k$ which implies 
the equality
$\h(\w_{q-1}(\zeta))=\h(\w_{q-1}(\xi)|\, \w_q(\xi))$ and completes the proof.
\end{proof}

\section{Examples} In this section we shall illustrate Theorem \ref{thm2} by several examples. 


\subsection{} Consider the vector bundle $\xi_k$ over $\RP^n$ which is the Whitney sum of $k$ copies of the canonical line bundle $\eta$ and of a trivial line bundle $\epsilon$, i.e. $\xi_k= k\eta\oplus \epsilon$. It is a rank $q=k+1$ vector bundle and its total Stiefel - Whitney class is $(1+\alpha)^k$ where
$\alpha\in H^1(\RP^n;\Z_2)$ is the generator. In particular, we see that 
$\w_k(\xi_k)=\alpha^k$ and $\w_{k+1}(\xi_k)=0$.  Using Theorem \ref{thm2} we obtain 
$\TC[\dot\xi_k: \dot E(\xi_k) \to \RP^n]\ge \lfloor n/k\rfloor +1.$
The upper bound (\ref{up}) gives $\TC[\dot\xi_k: \dot E(\xi_k) \to \RP^n]< 2+ (n+1)/k$ which is equivalent to 
$$\TC[\dot\xi_k: \dot E(\xi_k) \to \RP^n]\le \lceil (n+1)/k\rceil +1.$$ We conclude
$$\lfloor n/k\rfloor +1 \, \le\,  \TC[\dot\xi_k: \dot E(\xi_k) \to \RP^n] \, \le\,  \lceil (n+1)/k\rceil +1.$$
This example shows that the parametrized topological complexity of sphere bundles can be arbitrarily large.

\subsection{} Consider the Grassmann manifold $G_2(\R^4)$ of 2-dimensional subspaces in $\R^4$, see \cite{ms}. It is a 4-dimensional closed smooth manifold. The canonical rank 2 vector bundle $\xi: E\to G_2(\R^4)$ has Stiefel - Whitney classes $\w_1=\w_1(\xi)$, and $\w_2=\w_2(\xi)$ which are elements of the cohomology ring $H^\ast(G_2(\R^4);\Z_2)$. It is known that the cohomology ring $H^\ast(G_2(\R^4);\Z_2)$
has generators $\w_1, \w_2, \bar \w_1, \bar \w_2$ which satisfy the defining relation
$(1+\w_1+\w_2)\cdot (1+\bar \w_1+\bar \w_2)=1,$
(see \cite{ms}, \S 7, Problem 7.B). The relations can be represented as follows:
$$\w_1+\bar \w_1=0, \, \, \w_2+ \w_1\bar\w_1+ \bar \w_2=0, \, \, \w_1\bar\w_2 + \w_2\bar\w_1=0, \, \, \w_2\bar\w_2=0.$$
The first two relations can be used to express $\bar \w_1$ and $\bar\w_2$ through the classes $\w_1$ and $\w_2$, and the last two relations give: 
$\w_1^3=0$ and $\w_2^2=\w_1^2\w_2$.  In particular, we obtain
$\h(\w_1(\xi)|\, \w_2(\xi))=2$. Applying Theorem \ref{thm2} we get 
$\tc[\dot\xi: \dot E\to G_2(\R^4)]\ge 3.$ The inequality (\ref{up}) gives in this case the upper bound 
$\tc[\dot\xi: \dot E\to G_2(\R^4)]\le 6.$ 

Here is a variation of this example: taking the rank 3 vector bundle $\xi'=\xi\oplus \epsilon$ over $G_2(\R^4)$, we find that $\w_2(\xi') =\w_2$ and $\h(\w_2)=2$ which gives 
$\tc[\dot\xi': \dot E'\to G_2(\R^4)]\ge 3.$ The inequality (\ref{up}) gives in this case the upper bound
$\tc[\dot\xi': \dot E'\to G_2(\R^4)]\le 4.$

We refer the reader to \cite{fw} for further results on topological complexity of spherical bundles. 
In \cite{fw} we use cohomology with integer coefficients (rather than cohomology with coefficients in 
$\Z_2$) and describe several examples with matching upper and lower bounds.


\begin{thebibliography}{8}
%

\bibitem{borsuk} Borsuk, K.: Theory of retracts. Monografie Matematyczne, Tom 44 Państwowe Wydawnictwo Naukowe, Warsaw, (1967)

\bibitem{cfw} Cohen, D. C., Farber, M., Weinberger, S.: Topology of parametrized motion planning algorithms. SIAM J. Appl. Algebra Geom. {\bf 5} (2), 229–249, (2021)

\bibitem{cfw2} Cohen, D. C., Farber, M., Weinberger, S.: Parametrized topological complexity of collision-free motion planning in the plane. To appear in the journal \lq\lq Annals of Mathematics and Artificial Intelligence\rq\rq. 

\bibitem{dold} Dold A.: Lectures on Algebraic Topology. Springer - Verlag, (1972)

\bibitem{fn} Fadell, E., Neuwirth, L.:
Configuration spaces. 
Math. Scand. {\bf 10}, 111- 118, (1962) 

\bibitem{fh} Fadell, E., Husseini, S.: 
Geometry and topology of configuration spaces. 
Springer-Verlag, Berlin, (2001)

\bibitem{far03} Farber, M.: Topological Complexity of Motion Planning, Discrete and
Computational Geometry {\bf 29}, 211 - 221, (2003)

\bibitem{far05} Farber, M.: Topology of robot motion planning. In: Morse Theoretic Methods in Non-linear Analysis and in Symplectic Topology, NATO Sci. Ser. II Math. Phys. Chem. 217, 
pp. 185-230,
Springer, Dordrecht (2006)



\bibitem{mfwafr}
Farber, M.:  Collision free motion planning on graphs. 
In: Algorithmic Foundations of Robotics IV, 
M. Erdmann, D. Hsu, M. Overmars, A. Frank van der Stappen (eds), 
Springer, pp. 123 - 138, (2005)


\bibitem{mfinvit} Farber, M.: 
Invitation to topological robotics. 
Zurich Lectures in Advanced Mathematics. European Mathematical Society (EMS), Zürich (2008) 

\bibitem{fgy} Farber, M., Grant, M., Yuzvinsky, S.:
Topological complexity of collision free motion planning algorithms in the presence of multiple moving obstacles.  In: Topology and robotics, 
Contemp. Math., pp. 75–83, 438, Amer. Math. Soc., (2007) 

\bibitem{fw} Farber, M., Weinberger, S., Topological complexity of spherical bundles, Preprint, (2022) 

\bibitem{garcia} Garc\'{\i}a-Calcines, J.M.: A note on covers defining relative and sectional categories, Topology Appl., {\bf 265}, p. 106810, (2019)

\bibitem{grant} Grant, M., Lupton, G., Vandembrouq, L. eds., Topological Complexity and Related Topics, Contemp. Math. 702, Amer. Math. Soc., Providence RI (2018)

\bibitem{gonz}  Gonza\'lez, J., Gutie\'rrez, B.: Topological complexity of collision-free multi-tasking motion planning on orientable surfaces, In: Topological Complexity and Related Topics, Contemp. Math. 702, Amer. Math. Soc., Providence, RI, pp. 151-163, (2018)

\bibitem{latombe} Latombe, J.-C.: Robot Motion Planning, Springer Internat. Ser. Eng. Comput. Sci. 124, Springer, Boston (1991)

\bibitem{lavalle} LaValle, S. M.: Planning Algorithms, Cambridge University Press, Cambridge (2006) 



\bibitem{ms} Milnor, J., Stasheff, J.: Characteristic classes, Princeton U. Press (1974) 

\bibitem{Sp} Spanier, E. H., Algebraic topology. 
Springer-Verlag, New York (1995) 


\end{thebibliography}
\end{document}